\newtheorem{theorem}{Theorem}
\newtheorem{lemma}{Lemma}
\newtheorem{remark}{Remark}
\title{Almost Cost-Free Communication in Federated Best Arm Identification
}
\author {
    Kota Srinivas Reddy,
    P. N. Karthik, and
    Vincent Y. F. Tan
}
\begin{document}

\maketitle

\begin{abstract}
We study the problem of best arm identification in a federated learning multi-armed bandit setup with a central server and multiple clients. Each client is associated with a multi-armed bandit in which each arm yields {\em i.i.d.}\ rewards following a Gaussian distribution with an unknown mean and known variance. The set of arms is assumed to be the same at all the  clients. We define two notions of best arm---local and global. The local best arm at a client is the arm with the largest mean among the arms local to the client, whereas the global best arm is the arm with the largest  average mean across all the clients. We assume that each client can only observe the rewards from its local arms  and thereby estimate its local best arm. The clients communicate with a central server on  uplinks that entail a cost of $C\ge0$ units per usage per uplink. The global best arm is estimated at the server. The goal is to identify the local best arms and the global best arm with minimal total cost, defined as the sum of the total number of arm selections at all the clients and the total communication cost, subject to an upper bound on the error probability. We propose a novel algorithm {\sc FedElim}  that is based on successive elimination and communicates only in exponential time steps, and obtain a high probability  instance-dependent upper bound on its total cost. The key takeaway from our paper is that for any $C\geq 0$ and error probabilities sufficiently small, the total number of arm selections (resp.\ the total cost) under {\sc FedElim} is at most~$2$   (resp.~$3$) times the maximum total number of arm selections under its variant that communicates in every time step. Additionally, we show that the latter is optimal in expectation up to a constant factor, thereby demonstrating that communication is almost cost-free in {\sc FedElim}.
We numerically validate the efficacy of  {\sc FedElim} on two synthetic datasets and the MovieLens dataset.
 \end{abstract}

\section{Introduction}
\begin{figure}[!t]
\centering
\includegraphics[width=6.00cm]{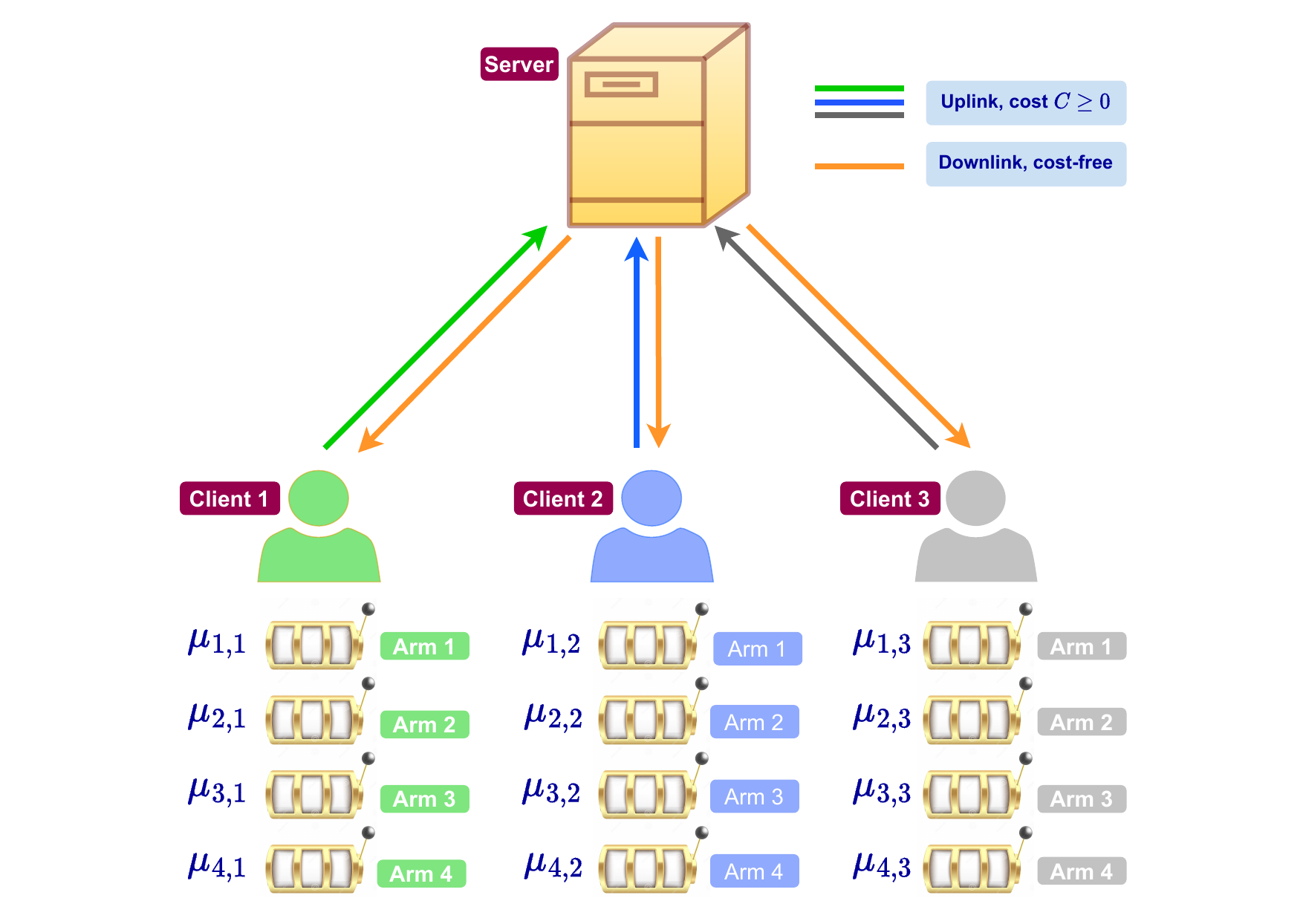} 
\caption{An illustration of our problem setup with $M=3$ clients and $K=4$ arms per client. The mean of arm $k$ of client $m$ is   $\mu_{k,m}$, where $k\in [K]$ and $m\in [M]$. Communication  of a scalar to the server is assumed to entail a cost of $C\geq 0$ units per usage of the uplink, whereas the downlink from the server to the client is  cost-free.} 
\label{fig:problem-setup}
\vspace{-.2in}
\end{figure}

We study an optimal stopping variant of the federated learning multi-armed bandit (FLMAB) regret minimisation problem of \citet{shi2021federated}. 
The specifics of our problem setup are as follows. We consider a federated multi-armed bandit setup with a central server and $M>1$ clients. Each client is associated with a multi-armed bandit with $K>1$ arms in which each arm yields independent and identically distributed ({\em i.i.d.}) rewards following a Gaussian distribution with an unknown mean and known variance. We assume that the set of arms is identical at all the clients. As in \citet{shi2021federated}, we consider two notions of best arm---{\em local} and {\em global}. The local best arm at a client is defined as the arm with the largest mean among the arms local to the client. The global best arm is the arm with the largest  average of the means averaged across the clients (we define these terms precisely later in the paper). We assume that each client can observe the rewards generated {\em only} from its local arms and thereby estimate its local best arm. The clients do not communicate directly with each other, but instead communicate with the central server. Communication from each client to the server entails a fixed cost of $C \ge 0$ units per usage per uplink. The information transmitted by the clients on the uplink is used by the server to estimate the global best arm. In contrast to the work of \citet{shi2021federated} where the goal is to minimise the regret accrued over a finite time horizon, the goal of our work is to find the local best arms of all the clients and the global best arm in a way so as to minimise the sum of the total number of arm pulls at the clients and the total communication cost, subject to an upper bound on the error probability. Figure \ref{fig:problem-setup} summarises our problem setup.


\subsection{Motivation}

The following example from the movie industry motivates our problem setup. Movies are typically categorised into various genres (for e.g., comedy, romance, action, thriller, etc.) and released in several parts (regions) of the world. The people of a region develop preferences for one or more genres courtesy of certain region-specific demographics (for e.g., age profile, females to males ratio of the population, etc.). Suppose that there are $M$ distinct regions and $K$ distinct genres. The following questions are  commonplace in the movie industry: (a) What genre of movies is most preferred {\em locally} by the people of a given region? (b) What genre of movies yields higher profits on the average {\em globally} across all regions? In the above example, a movie is akin to an arm and a region is akin to a client. The question in (a) above seeks to find the local best arm of each client, whereas the question in (b) seeks to find the global best arm. 

\subsection{Related Works}
Federated learning is an emerging paradigm that focuses on a distributed machine learning scenario in which there are multiple clients and a central server training a common machine learning model while keeping each client's local data private; see \citet{mcmahan2017communication} and \citet{kairouz2021advances} and the references therein for more details. 
The work of \citet{shi2021federated} extends the federated learning framework to multi-armed bandit paradigm and studies FLMAB under the theme of regret minimisation wherein 
the goal is to design arm selection algorithms to minimise the regret accrued over a finite time horizon. See \citet{lattimore2020bandit} and the references therein for more details on the regret minimisation theme and other related works on this theme. Contrary to the theme of regret minimisation,   {\em best arm identification}   falls under the theme of optimal stopping and can be embedded within the classical framework of \citet{chernoff1959sequential}. As noted in the work of \citet{bubeck2011pure} and \citet{zhong2021}, algorithms that are optimal in the context of regret minimisation are not necessarily so in the context of optimal stopping.

The problem of best arm identification is   well-studied and consists in finding the best arm (i.e., the arm with the largest mean value) in a (single) multi-armed bandit. This problem is studied under two complementary settings: (a) the {\em fixed-confidence setting}, where the objective is to find the best arm with the smallest expected number of arm pulls subject to ensuring that the error probability is no more than a given threshold value;  see \citet{even2006action, jamieson2014lil}, and (b) the {\em fixed-budget setting}, where the objective is to find the best arm as accurately as possible given a threshold on the number of arm pulls; see \citet{audibert2010best} and \citet{bubeck2011pure}. 
In this paper, we consider the fixed-confidence setting. For an excellent survey, see \citet{jamieson2014best}.

\citet{mitra2021exploiting}   study a federated variant of the best arm identification problem with a central server and multiple clients, similar to our work. However, their problem setting differs from ours in that in their work, the arms of a single multi-armed bandit are partitioned into as many subsets as there are clients. Each client is associated with a subset of arms and can observe only the rewards generated from the arms in this subset. The central goal in their paper is to identify the global best arm, defined as arm with the largest mean among the local best arms of the clients. Notice that an arm that is not the local best arm at any client cannot be the global best arm. Therefore, it suffices for each client to communicate to the server only the empirical mean of the estimated local best arm; this communication is assumed to take place periodically, only for time step $n\in \{1, H+1, 2H+1, \ldots\}$ for some fixed period $H$. However, in our work, the global best arm (defined  as the arm with the largest  average of the means averaged across the clients) may not necessarily be the local best arm at any client, because of which the clients may need to {\em communicate} the empirical means of their non-local best arms. Also, we propose an alternative strategy of communicating only at time steps $n=2^t$ for $t\in \mathbb{N}_0:=  \mathbb{N}\cup\{0\}$, and demonstrate that this strategy, called {\em exponentially sparse communication}, mitigates the overall communication cost and renders communication almost cost-free.
%
%
%


Works on collaborative learning in bandits (e.g., \citet{hillel2013distributed} and \citet{tao2019collaborative}) consider a central server and multiple clients as in our work, but with one salient difference: in the abovementioned works, the arms {\em and their distributions} are identical at all the clients (the goal is to establish collaboration among the clients to find the best arm faster than without collaboration). As a result, the local best arm of each client is identical to those of the other clients. In this paper, we assume that the set of arms is  identical at all the clients and allow for the arms to have {\em different} distributions across the clients, thereby leading to possibly distinct local and global best arms.

\subsection{Contributions}
We now highlight the key contributions of this paper.
\begin{itemize}
    \item We propose a novel algorithm   called {\sc FED}erated learning successive {\sc ELIM}ination algorithm (or {\sc FedElim}) for finding the local best arms and the global best arm (see Algorithm \ref{alg:FedElim}). The key feature of  {\sc FedElim} is that clients communicate to the server in only exponential time steps $n=2^t$ for some $t\in \mathbb{N}_0$. Given any $\delta\in (0,1)$, we show that  {\sc FedElim} declares the local best arms and the global best arm correctly with probability at least $1-\delta$. We present problem-instance dependent upper bounds on the total number of arm selections, the communication cost, and the total cost of {\sc FedElim}, each of which holds with probability at least $1-\delta$ (Theorem \ref{thm:FLSEA_withcost}). Our results show that the total cost of {\sc FedElim} scales as $\ln(1/\delta)$ in the error probability threshold $\delta$, and inversely as the squares of the {\em sub-optimality gaps} of the arms.
    
    \item For a variant of {\sc FedElim} (called $\textsc{FedElim}0$) that communicates in {\em every time step}, we obtain a high probability problem instance-dependent upper bound on the total number of arm selections (Theorem \ref{thm:FLSEA_upperbound}). We also obtain a  lower bound on the expected total number of arm selections required by {\em any} algorithm which outputs the correct answer with probability at least $1-\delta$ (Theorem \ref{thm:lb1_c0}), and show that the upper and the lower bounds are tight when $M$ is constant or when $M$ is sufficiently large. 
    
    \item The key takeaway from our paper is that for any $C\geq 0$ and sufficiently small $\delta$, the  total cost of {\sc FedElim} is at most  $3$ times  the  total number of arm selections under $\textsc{FedElim}0$ 
    with probability at least $1-\delta$. That is, communication is {\em almost cost-free} in {\sc FedElim}. 
    Through extensive simulations on two synthetic datasets and the large-scale, real-world MovieLens dataset, we compare the total cost of {\sc FedElim} with that of a periodic communication protocol with period $H$ based on successive elimination, and observe that there is a ``sweet spot'' for $H$ where the total cost of the latter is minimal. Determining this sweet spot requires knowing $C$ and other problem instance-specific constants and is infeasible in most practical settings. In comparison, {\sc FedElim}, while being agnostic to $C$ and other problem instance-specific constants, learns this sweet spot on-the-fly.

\end{itemize}

Although the focus of our paper is best arm identification, 
{\sc FedElim}   may be adapted to solve more general problems such as top-$N$ arms identification \citep{kalyanakrishnan2012pac}, thresholding in bandits \citep{locatelli2016optimal}, $\epsilon$-optimal arm identification \citep{even2006action}, and so on.
 In our paper, the Gaussian rewards assumption is merely for simplicity in the presentation. Our analyses are   applicable to observations that are sub-Gaussian. For more details, see Remarks \ref{rmk:subG} and \ref{rmk:sG_lb}.

\section{Notations and Problem Setup}\label{sec:notations_problemstatemt}

In this section, we lay down the notations used throughout the paper, and specify the problem setup. We consider a federated multi-armed bandit with a central server and $M$ clients. Each client is associated with a multi-armed bandit with $K$ arms (called {\em local} arms). We refer to the $K$-armed bandit associated with a client as its {\em local} multi-armed bandit. 
We write $[K] \coloneqq \{1,2,\ldots,K\}$ to denote the set of arms, and assume that $[K]$ is the same for all the clients and the server. Also, we write $[M]$ to denote the set of clients. 

\subsection{Local and Global Best Arms}

There are  $M$ local multi-armed bandits, one associated with each client.
For $n\in \mathbb{N}$,  let $X_{k,m}(n)$ denote the reward (or observation)  generated from local arm $k$ of client $m$ at time~$n$. For each $(k,m)$ pair, $\{X_{k,m}(n): n\geq 1\}$ is an  {\em i.i.d.} process following a Gaussian distribution with an unknown mean $\mu_{k,m}\in \mathbb{R}$ and known variance $\sigma^2$. For simplicity,  we set $\sigma^2=1$.
We define the {\em local best arm} $k_m^*$ of client $m$ as the arm with the largest mean among the local arms of client $m$, {\em i.e.,}  $k^*_m\coloneqq \arg\max_k \mu_{k,m}$; we assume that $k_m^*$ is unique for each $m$. Also,   let $\mu^{*}_m\coloneqq \mu_{k_m^*,m}=\max_k \mu_{k,m}$ be the mean of the local best arm of client $m$. Note that two different clients may have distinct local best arms. 
%
Letting $\mu_k:=\sum_{m=1}^{M}\mu_{k,m}/M$, we define the {\em global best arm} $k^*$ as the arm with the largest value of $\mu_k$, i.e., $k^*=\arg\max_k \mu_{k}$, and assume that $k^*$ is unique. We let $\mu^{*}\coloneqq \mu_{k^*}=\max_k \mu_{k}$ denote the mean of the global best arm. The global best arm may not necessarily be the local best arm at any   client. 
 
\subsection{Communication Model}
We assume that each client can observe only the rewards generated from its local arms, based on which the client can estimate its local best arm. Estimating the global best arm requires exchange of information among the clients. We assume that each client communicates with a central server, and that there is no direct communication between any two clients. We also assume that the communication link from a client to the server (uplink) entails a fixed cost of $C \ge 0$ units per usage of the link, and that the communication link from the server to the client (downlink) is cost-free as in \citet{hanna2022solving}. Each client sends real-valued information about the rewards from one or more of its local arms on its uplink. The server aggregates the information coming from all the clients to construct a set of arms that are potential contenders for being the global best arm, and communicates this set to each of the clients on the downlink. Each client selects each arm in set received from the server to obtain a more refined estimate of the arm's empirical mean. In this way, the clients and the server communicate until there is exactly one contender arm at the server.

When $C=0$, it is clearly advantageous for the clients to communicate with the server at every time step. When $C>0$, it is, however, beneficial for the clients to communicate with the server only  {\em intermittently} so that the   overall communication cost will be minimized. An instance of periodic communication in federated multi-armed bandits, where the clients communicate with the server periodically, once every $H$ time steps for a fixed integer $H>0$, may be seen in \citet{mitra2021exploiting}. An alternative communication strategy, one that we explore in this paper, is for the clients to communicate with the server only at time steps $n$ of the form $n=2^t$ for $t\in  \mathbb{N}_0$. As we shall see shortly, the latter strategy mitigates the communication costs significantly and renders communication almost cost-free. 

\subsection{Problem Instance and Algorithm}

A {\em problem instance} is identified by the matrix $\boldsymbol{\mu}=[\mu_{k,m}:k\in[K],m\in[M]]\in\mathbb{R}^{K\times M}$ of the means of the local arms of all the clients. The actual value of $\boldsymbol{\mu}$ is unknown, and the goal is to find the local best arm at each of the clients and also the global best arm (i.e., the vector $\mathbf{S}(\boldsymbol{\mu})\coloneqq (k_1^*,k_2^*,\ldots, k_{M}^*,k^*)\in [K]^{m+1}$) with high confidence. Each client selects one or more of its local arms at every time  $n \in  \mathbb{N}$ and forms an estimate of its local best arm as the arm with the largest empirical mean at time step~$n$.


An {\em algorithm} for finding the local best arms and the global best arm prescribes the following:
\begin{itemize}
	\item A {\em selection rule} that specifies the arm(s) that each client must select from amongst its local arms for each $n$.
	\item A {\em communication rule} that specifies the condition(s) under which the clients will communicate with the server 
and the information that the clients will send to the server.
	\item A {\em termination rule} that specifies when to stop further selection of arms at the clients.
	\item A {\em declaration rule} that specifies the estimates $ \hat{\mathbf{S}} \coloneqq (\hat{k}^*_1,\hat{k}^*_2,\ldots,\hat{k}^*_M,\hat{k}^*)\in [K]^{M+1}$ of the local best arms and the global best arm to output; here, $\hat{k}^*_m$ is the estimate of the local best arm of client $m \in [M]$ and $\hat{k}^*$ is the estimate of the global best arm.
\end{itemize}
We denote an  algorithm by $\pi$ and 
define its
{\em total cost} 
\begin{align}
	C^{\mathrm{total}}(\pi) &= (\text{total number of arm pulls under }\pi \nonumber\\*
	&\quad+ \text{total communication cost under } \pi).
	\label{eq:cost-of-an-algorithm}
\end{align}   
In \eqref{eq:cost-of-an-algorithm}, the first component on the right hand side represents the total number of arm selections made by all the clients until termination, and the second component is the total communication cost incurred across all the clients. 

\subsection{Objective}
For $\delta\in (0,1)$, an algorithm $\pi$ is said to be {\em $\delta$-probably approximately correct} (or $\delta$-PAC) if for all $\boldsymbol{\mu}\in \mathbb{R}^{K\times M}$, we have $P_{\boldsymbol{\mu}}^{\pi}(\hat{\mathbf{S}}=\mathbf{S}(\boldsymbol{\mu})) \geq 1-\delta$; here, $P_{\boldsymbol{\mu}}^\pi(\cdot)$ denotes the probability measure under algorithm $\pi$ and   problem instance $\boldsymbol{\mu}$. Note that any $\delta$-PAC algorithm $\pi$ must declare the correct output with probability at least $1-\delta$ for all problem instances $\boldsymbol{\mu}$, as $\pi$ is oblivious to the knowledge of the underlying problem instance. Given any $\boldsymbol{\mu}$ and $\delta\in (0,1)$, our objective is to design a $\delta$-PAC algorithm, say $\pi^*$, for finding the local best arms and the global best arm, and derive a $\boldsymbol{\mu}$-dependent upper bound, say $U(\boldsymbol{\mu}, \delta)$, on its total cost $C^{\mathrm{total}}(\pi^*)$, such that
\begin{equation}
	P_{\boldsymbol{\mu}}^{\pi^*}\big(C^{\mathrm{total}}(\pi^*) \leq U(\boldsymbol{\mu}, \delta)\big) \geq 1-\delta.
\end{equation}


In the following section, we present a version of the well-known successive elimination algorithm of  \citet{even2006action} for finding the local best arms and the global best arm. We interleave it with the exponentially sparse communication sub-protocol, and subsequently obtain a high probability upper bound on its total cost.

\section{The Federated Learning Successive Elimination Algorithm ({\sc FedElim})}
Our algorithm, termed {\em Federated Learning Successive Elimination Algorithm} (or {\sc FedElim}), is presented in Algorithm~\ref{alg:FedElim}. In the following, we provide some algorithm-specific notations and a detailed description of {\sc FedElim}.

\subsection{Algorithm-Specific Notations}
\label{subsec:algorithm-notations}
The {\sc FedElim} algorithm proceeds in several time steps; we denote a generic time step by $n\in \mathbb{N}$. An arm is said to be a {\em local active arm} of client $m$ if it is still a contender for being the client's local best arm. On the other hand, an arm is said to be a {\em global active arm} at the central server if it is still a contender for being the global best arm. At any given time step, we let $\mathcal{S}_{\mathrm{l},m}$ and $\mathcal{S}_{\mathrm{g}}$ denote respectively the set of local active arms at client $m$ and the set of global active arms at the server. 
We write $\hat{\mu}_{k,m}(n)$ to denote the empirical mean of arm $k$ of client $m$ at time step $n$, and define $\hat{\mu}_{k}(n)\coloneqq\sum_{m=1}^{M}\hat{\mu}_{k,m}(n)/M$. We let $\alpha_{\mathrm{l}}(n)\coloneqq \sqrt{\frac{2\ln{(8KMn^2/\delta)}}{n}}$ and $\alpha_{\mathrm{g}}(n) \coloneqq \sqrt{\frac{2\ln{(8Kn^2/\delta)}}{Mn}}$ denote respectively the {\em local confidence parameter} and the {\em global confidence parameter} in time step $n$.

\subsection{Algorithm Description}
{\em At each client:} In each time step $n$, the algorithm first computes $\mathcal{S}_m=\mathcal{S}_{\mathrm{l},m}\cup \mathcal{S}_{\mathrm{g}}$ for each $m\in [M]$. If $|\mathcal{S}_m|>1$, the algorithm selects each arm in $\mathcal{S}_m$ once and updates their respective empirical means ({\em selection rule}).
Next, for each $m\in [M]$, the algorithm checks for the validity of the condition $|\mathcal{S}_{\mathrm{l},m}|>1$. If this condition holds, the algorithm eliminates all those arms in $\mathcal{S}_{\mathrm{l},m}$ that are no more contenders for being the local best arm of client $m$. This is accomplished as follows: for each $m\in [M]$, the algorithm computes $\hat{\mu}_{*,m}(n) \coloneqq \max_{k\in \mathcal{S}_{\mathrm{l},m}} \mu_{k,m}(n)$, and eliminates arm $k$ from $\mathcal{S}_{\mathrm{l},m}$ if
    $\hat{\mu}_{*,m}(n)-\hat{\mu}_{k,m}(n)>2\alpha_{\mathrm{l}}(n)$.
The arms remaining in $\mathcal{S}_{\mathrm{l},m}$ after elimination are the local active arms of client $m$. For each $m\in [M]$, if $|\mathcal{S}_{\mathrm{l},m}|=1$ after elimination, the algorithm outputs the arm in $\mathcal{S}_{\mathrm{l},m}$ as the local best arm of client $m$ ({\em declaration rule} for client $m$). 

{\em At the server:} After working on $\mathcal{S}_{\mathrm{l},m}$ for each $m\in [M]$ as outlined above, the algorithm checks if $|\mathcal{S}_{\mathrm{g}}|>1$ and if $n=2^t$ for some $t\in \mathbb{N}_0$. If both of these conditions hold, then each client $m\in [M]$ sends to the server its estimates $\{\hat{\mu}_{k,m}(n): k\in \mathcal{S}_{\mathrm{g}}\}$ of the empirical means of the arms in $\mathcal{S}_{\mathrm{g}}$, one per usage of its uplink ({\em communication rule}). Because the uplink entails a cost of $C\geq 0$, the communication cost incurred at a client is $C\,|\mathcal{S}_{\mathrm{g}}|$, and therefore the total communication cost across all the clients is $C\,M\,|\mathcal{S}_{\mathrm{g}}|$. The server eliminates all those arms in $\mathcal{S}_{\mathrm{g}}$ that are no more contenders for being the global best arm as follows: the server first computes $\hat{\mu}_k(n) = \sum_{m=1}^{M}\hat{\mu}_{k,m}(n)/M$ for each $k\in \mathcal{S}_{\mathrm{g}}$ and also $\hat{\mu}_{*}(n) = \max_{k\in \mathcal{S}_{\mathrm{g}}} \hat{\mu}_k(n)$, and eliminates arm $k$ from $\mathcal{S}_{\mathrm{g}}$ if $\hat{\mu}_{*}(n) -  \hat{\mu}_k(n) > 2\alpha_{\mathrm{g}}(n)$. 
The arms remaining in $\mathcal{S}_{\mathrm{g}}$ after elimination are the global active arms. If $|\mathcal{S}_{\mathrm{g}}|=1$ after elimination, the algorithm outputs the arm in $\mathcal{S}_{\mathrm{g}}$ as the global best arm ({\em declaration rule} for the global best arm). 

Upon identifying the local best arms and the global best arm, the algorithm {\em terminates}. Else, if at least one of the local best arms or the global best arm is not identified, the algorithm continues to the next time step.

\begin{remark}
Recall that in our problem setup, the global best arm may not necessarily be the local best arm at any client. In fact, the local best arms and the global best arms can be all distinct. As a result, even if an arm (say arm $k$) is eliminated from $\mathcal{S}_{\mathrm{l},m}$ at client $m$ (i.e., arm $k$ is not the local best arm of client $m$), it may still need to be selected further before it can be eliminated globally from $\mathcal{S}_{\mathrm{g}}$, and vice-versa. It is for this reason that we set $\mathcal{S}_m=\mathcal{S}_{\mathrm{l},m}\cup \mathcal{S}_{\mathrm{g}}$ as the set of arms to be selected at client $m$. In contrast, when the global best arm is always one of the local best arms, as in \citet{mitra2021exploiting}, eliminating an arm locally at a client is akin to eliminating the arm globally.    
\end{remark}


\begin{algorithm}[!t]
\DontPrintSemicolon
  
  \KwInput{$K \in \mathbb{N}$, $M\in \mathbb{N}$, $\delta\in (0,1)$}
  \KwOutput{$(\hat{k}_1^*, \ldots, k_M^*, \hat{k}^*)\in [K]^{M+1}$}
  \KwInitialize{$n=0$, $\hat{\mu}_{k,m}(n)=0$ and $\mathcal{S}_{\mathrm{l},m}=[K]$ for all $k,m$, $\hat{\mu}_{k}(n)=0$ and $\mathcal{S}_{\mathrm{g}}=[K]$ for all~$k$,  $\texttt{run}=\texttt{true}$}
  \While{{\em $\texttt{run}=\texttt{true}$}}
   {
   {$n \leftarrow n+1$}
   
   		\For{$m=1:M$}    
        { 
        
        
        {$\mathcal{S}_m \leftarrow \mathcal{S}_{\mathrm{l},m}\cup \mathcal{S}_{\mathrm{g}}$} \tcp*[f]{\tiny Arms client $m$ selects}

%
	\If(\tcp*[f]{\tiny Selection rule}){$|\mathcal{S}_{m}|>1$}{\For{$k\in \mathcal{S}_m$}{ pull arm $k$ of client $m$ and update its empirical mean $\hat{\mu}_{k,m}(n)$}} 
        
        \If{$|\mathcal{S}_{\mathrm{l},m}|>1$  }
        { 
        Set $\hat{\mu}_{*,m}(n)=\max_{k \in \mathcal{S}_{\mathrm{l},m}}\hat{\mu}_{k,m}(n)$ 
         
         \For(\tcp*[f]{\tiny Inactive local arms elimination}){$k\in \mathcal{S}_{\mathrm{l},m}$ such that $\hat{\mu}_{*,m}(n)-\hat{\mu}_{k,m}(n)\geq 2\alpha_{\mathrm{l}}(n)$}    
        { $\mathcal{S}_{\mathrm{l},m} \leftarrow \mathcal{S}_{\mathrm{l},m}\backslash \{k\}$}
         }
         
         \If(\tcp*[f]{\tiny Declaration rule}){$|\mathcal{S}_{\mathrm{l},m}|=1$}{
         Output $ \hat{k}_m^*\in\mathcal{S}_{\mathrm{l},m}$ 
         
         $\mathcal{S}_{\mathrm{l},m}\leftarrow \emptyset$}
      } 
      
      \If(\tcp*[f]{\tiny Communication rule}){$|\mathcal{S}_{\mathrm{g}}|>1$ and $n=2^t$ for some $t\in  \mathbb{N}_0$}{
       \For{$k\in \mathcal{S}_{\mathrm{g}}$}    
        {For each $m\in [M]$, client $m$ sends $\hat{\mu}_{k,m}(n)$ to the server.
        
         Set $\hat{\mu}_{k}(n) = \sum_{m=1}^{M}\hat{\mu}_{k,m}(n)/M$} 
         
        Set $\hat{\mu}_{*}(n)=\max_{k \in \mathcal{S}_{\mathrm{g}}}\hat{\mu}_{k}(n)$ 
         
         \For(\tcp*[f]{\tiny Inactive global arms elimination}){$k\in \mathcal{S}_{\mathrm{g}}$ such that $\hat{\mu}_{*}(n)-\hat{\mu}_{k}(n)\geq 2\alpha_{\mathrm{g}}(n)$}    
        { $\mathcal{S}_{\mathrm{g}} \leftarrow \mathcal{S}_{\mathrm{g}}\backslash \{k\}$}
         }
         
         \If(\tcp*[f]{\tiny Declaration rule}){$|\mathcal{S}_{\mathrm{g}}|=1$}{
        Output $\hat{k}^*\in \mathcal{S}_{\mathrm{g}}$ 
         
         $\mathcal{S}_{\mathrm{g}}\leftarrow \emptyset$}

         \If(\tcp*[f]{\tiny Termination rule}){$|\mathcal{S}_{m}|=0$ for all  $m\in[M]$}{  $\texttt{run}=\texttt{false}$}
   }
\caption{Federated Learning Successive Elimination Algorithm ({\sc FedElim})}
\label{alg:FedElim}
\end{algorithm}

\begin{remark}\label{rmk:knowledge_of_C}
To keep the total cost of an algorithm small, it is imperative to strike a  balance between the total number of arm selections and the communication cost. For instance, as is naturally expected and also demonstrated by our numerical results later in the paper, a periodic communication scheme with period $H$ and based on successive elimination incurs a larger communication cost than our exponentially sparse communication scheme (see Figures~\ref{subfig:comm-cost-comparison} and~\ref{subfig:comm-cost-comparison-movielens}). With regard to the total number of arm selections, one might expect that the periodic communication protocol outperforms our exponential sparse communication protocol because more frequent communication in the former leads to faster identification of the global best arm. From our numerical results, we find that this is true only partially. Rather interestingly, Figures~\ref{subfig:total-cost-comparison} and~\ref{subfig:total-cost-comparison-movielens} indicate that the total cost of a periodic communication algorithm (based on successive elimination) with period $H$ decreases, attains a minimum, and thereafter increases with increase in $H$, thereby suggesting that there is ``sweet spot'' for $H$, say $H_{\mathrm{opt}}$, where the total cost is minimal. However, $H_{\mathrm{opt}}$ is, in general, a function of $C$ and problem instance-specific constants which are not known beforehand in most practical settings, thereby making the computation of $H_{\mathrm{opt}}$ infeasible. Figures~\ref{subfig:total-cost-comparison} and~\ref{subfig:total-cost-comparison-movielens} show that {\sc FedElim} finds this sweet spot while being agnostic to $C$ and other problem instance-specific constants, and thereby achieves a balanced trade-off between the total number of arm selections and comm. cost.
\end{remark}

\section{Performance Analysis of {\sc FedElim}}\label{sec:performance}
In this section, we present our theoretical results on the performance (total number of arm pulls, total communication cost, and the total cost) of {\sc FedElim}. We only state the results below and provide the proofs in the supplementary material. The first result below asserts that given any $\delta \in (0,1)$, {\sc FedElim} is $\delta$-PAC, i.e., it identifies the local best arms and the global best arm  correctly with probability at least $1-\delta$.

\begin{theorem}\label{thm:correctness}
Given any $\delta \in (0,1)$, {\sc FedElim} identifies the local best arms and global best arm correctly with probability at least $1-\delta$ and is thus $\delta$-PAC.
\end{theorem}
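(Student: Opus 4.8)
The plan is to establish a single high-probability ``good event'' under which all the empirical means concentrate around their true values for every arm, every client, and every time step, and then argue that on this good event all elimination decisions made by \textsc{FedElim} are correct. First I would introduce, for each local arm $(k,m)$ and time $n$, the event that $|\hat{\mu}_{k,m}(n) - \mu_{k,m}| \le \alpha_{\mathrm{l}}(n)$, and for each candidate global arm $k$ the event that $|\hat{\mu}_k(n) - \mu_k| \le \alpha_{\mathrm{g}}(n)$. Since the rewards are unit-variance Gaussians, a standard sub-Gaussian tail bound gives that $\hat{\mu}_{k,m}(n)$ deviates from $\mu_{k,m}$ by more than $\alpha_{\mathrm{l}}(n) = \sqrt{2\ln(8KMn^2/\delta)/n}$ with probability at most $2\exp(-n\,\alpha_{\mathrm{l}}(n)^2/2) = \delta/(4KMn^2)$; similarly, $\hat{\mu}_k(n)$ is an average of $M$ independent means each based on $n$ samples, hence has variance $1/(Mn)$, so its deviation exceeding $\alpha_{\mathrm{g}}(n) = \sqrt{2\ln(8Kn^2/\delta)/(Mn)}$ occurs with probability at most $\delta/(4Kn^2)$.

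Next I would take a union bound over all arms, clients, and time steps to control the total failure probability. Summing the local failure probabilities over $k\in[K]$, $m\in[M]$, and $n\in\mathbb{N}$ gives $\sum_n (\delta/(4n^2)) = (\delta/4)\cdot \pi^2/6 < \delta/2$, using $\sum_{n\ge 1} 1/n^2 = \pi^2/6$; the global confidence terms contribute an analogous bound of at most $\delta/2$. Hence the good event $\mathcal{E}$ on which all the concentration inequalities hold simultaneously has $P_{\boldsymbol{\mu}}(\mathcal{E}) \ge 1-\delta$. (The factor $8$ rather than $6$ inside the logarithms provides the slack needed to push each piece strictly below $\delta/2$.)

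Then I would argue correctness of every declaration on $\mathcal{E}$. For the local part, fix client $m$: on $\mathcal{E}$, whenever an arm $k$ is eliminated from $\mathcal{S}_{\mathrm{l},m}$ because $\hat{\mu}_{*,m}(n) - \hat{\mu}_{k,m}(n) \ge 2\alpha_{\mathrm{l}}(n)$, the two-sided concentration bounds imply $\mu_{*,m}' - \mu_{k,m} > 0$ where $\mu_{*,m}'$ is the true mean of the empirically-best active arm, so the eliminated arm has strictly smaller true mean than some active arm and therefore cannot be the local best arm $k_m^*$; conversely the same concentration ensures $k_m^*$ is never eliminated, since the gap between $\hat{\mu}_{k_m^*,m}(n)$ and the empirical max is at most $2\alpha_{\mathrm{l}}(n)$. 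Thus $\mathcal{S}_{\mathrm{l},m}$ always contains $k_m^*$ and, when it reduces to a singleton, that singleton must be $k_m^*$. An identical argument using $\alpha_{\mathrm{g}}(n)$ and the averaged means $\mu_k$ shows the server never eliminates $k^*$ from $\mathcal{S}_{\mathrm{g}}$ and declares it correctly. Since all declarations are correct on $\mathcal{E}$, we have $P_{\boldsymbol{\mu}}^{\textsc{FedElim}}(\hat{\mathbf{S}} = \mathbf{S}(\boldsymbol{\mu})) \ge P_{\boldsymbol{\mu}}(\mathcal{E}) \ge 1-\delta$, as required.

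The main obstacle I anticipate is purely bookkeeping rather than conceptual: one must verify that the specific constants in $\alpha_{\mathrm{l}}$ and $\alpha_{\mathrm{g}}$ (the factor $8$ and the $n^2$ inside the logarithm) make the union bound close at exactly $\delta$, and one must handle the sparse-communication subtlety—namely that global eliminations only occur at times $n = 2^t$, so the relevant empirical means $\hat{\mu}_{k,m}(n)$ at the server are those accumulated up to the current communication round. Because the good event $\mathcal{E}$ is defined to hold at \emph{every} $n$, it in particular holds at the communication times $n=2^t$, so restricting elimination to these times does not affect correctness; it only affects the sample complexity, which is the subject of the later theorems rather than of this $\delta$-PAC guarantee.
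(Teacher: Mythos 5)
Your proposal is correct and follows essentially the same route as the paper's proof: define the good event $\mathcal{E}$ via the confidence radii $\alpha_{\mathrm{l}}(n)$, $\alpha_{\mathrm{g}}(n)$, establish $P(\mathcal{E})\ge 1-\delta$ by Gaussian concentration plus a union bound over all arms, clients, and time steps, and then show that on $\mathcal{E}$ no elimination can ever remove a true best arm. The only cosmetic difference is that the paper phrases the last step as a proof by contradiction (a wrong declaration would force $\hat{\mu}_{\hat{k}_m^*,m}(n)-\hat{\mu}_{k_m^*,m}(n)\geq 2\alpha_{\mathrm{l}}(n)$ at some $n$, which $\mathcal{E}$ forbids), whereas you argue the contrapositive directly; your explicit remark that $\mathcal{E}$ holding at all $n$ covers the sparse communication times $n=2^t$ is a point the paper leaves implicit.
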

In the proof (presented in the supplementary material), we first show that for any $\delta \in (0,1)$, the event 
\begin{align}
	\!\!\mathcal{E}\! \coloneqq\! \bigcap_{n\in\mathbb{N}, k\in [K] , m\in [M]} \left\{\!\!\!  \begin{array}{c} \hspace{.32in} |\hat{\mu}_k(n)- \mu_k| \!\leq\! \alpha_{\mathrm{g}}(n),\! \\ |\hat{\mu}_{k,m}(n) - \mu_{k,m}| \!\leq \!\alpha_{\mathrm{l}}(n)  \end{array} \!\!\right\} 
	\label{eq:event-E}
\end{align}
has probability at least $1-\delta$; this is established using a standard inequality on the concentration of the empirical mean around the true mean for Gaussian rewards. We then show that {\sc FedElim} always outputs the correct answer under $\mathcal{E}$.

We now analyse a variant of {\sc FedElim} called $\textsc{FedElim}0$ which communicates to the server {\em in every time step}. Specifically, $\textsc{FedElim}0$ differs from {\sc FedElim} in line $15$ of Algorithm~\ref{alg:FedElim}, which is executed for all $n$ in $\textsc{FedElim}0$ but only for $n=2^t$ for $t\in \mathbb{N}_0$ in {\sc FedElim}. Our interest is only in the total number of arm selections of $\textsc{FedElim}0$, say $T_{\textsc{FedElim}0}$, required to find the local best arms and the global best arm on the event  $\mathcal{E}$, and how this compares with the total number of arm selections of other algorithms which also communicate in every time step. As we shall   see, $T_{\textsc{FedElim}0}$ is an important term that governs the problem instance-dependent upper bounds for the total number of arm selections and the total cost of {\sc FedElim}. Note that $T_{\textsc{FedElim}0}$ is also the total cost of $\textsc{FedElim}0$ on $\mathcal{E}$ when $C=0$.

\subsection{Performance Analysis of $\textsc{FedElim}0$} 
%
%

For $k\neq k^*_m$,   let $\Delta_{k,m}\coloneqq\mu_{k^*_m,m}-\mu_{k,m}$ denote the {\em suboptimality gap} between the means of arm $k$ of client $m$ and the local best arm of client $m$, and let $\Delta_{k^*_m,m}\coloneqq\min_{k\neq k^*_m}\Delta_{k,m}$.  Similarly, for $k\neq k^*$, we let $\Delta_{k}\coloneqq \mu_{k^*}-\mu_{k}$ and $\Delta_{k^*}\coloneqq \min_{k\neq k^*}\Delta_{k}$. The following result provides a problem instance-dependent upper bound on $T_{\textsc{FedElim}0}$.

 
 \begin{theorem} \label{thm:FLSEA_upperbound}
 Fix $\delta \in (0,1)$. On the event $\mathcal{E}$ defined in \eqref{eq:event-E}, 
 \begin{align}
 T_{\textsc{FedElim}0} &\leq T \coloneqq \sum_{k=1}^{K}\ \sum_{m=1}^{M}\max \big\{ T_{k,m}, \ T_k \big\},
 	\label{eq:high-prob-upper-bound-c-is-zero}
 \end{align}
  where for each $k\in[K]$ and $m\in [M]$,
 \begin{align}
 T_{k,m} & \coloneqq  102\cdot  \frac{\ln\Big(\frac{64\sqrt{\frac{8KM}{\delta}}}{\Delta_{k,m}^2}\Big)}{\Delta_{k,m}^2} + 1,\label{eq:T_km}\\
 T_k &\coloneqq 102\cdot \frac{\ln\Big(\frac{64\sqrt{\frac{8K}{\delta}}}{M\Delta_{k}^2}\Big)}{M\Delta_{k}^2} + 1. \label{eq:T_k}
 \end{align}
 \end{theorem}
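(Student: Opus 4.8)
The plan is to condition throughout on the event $\mathcal{E}$ from \eqref{eq:event-E}, on which every empirical mean lies within its stated confidence width of the true mean, and to bound the number of pulls of each arm separately. In $\textsc{FedElim}0$, client $m$ pulls arm $k$ in time step $n$ precisely when $k\in\mathcal{S}_m=\mathcal{S}_{\mathrm{l},m}\cup\mathcal{S}_{\mathrm{g}}$ and $|\mathcal{S}_m|>1$; since membership in both $\mathcal{S}_{\mathrm{l},m}$ and $\mathcal{S}_{\mathrm{g}}$ is monotonically nonincreasing in $n$, the number of pulls $N_{k,m}$ of arm $k$ at client $m$ satisfies $N_{k,m}\le\max\{\tau^{\mathrm{l}}_{k,m},\tau^{\mathrm{g}}_{k}\}$, where $\tau^{\mathrm{l}}_{k,m}$ and $\tau^{\mathrm{g}}_{k}$ are the times at which $k$ leaves $\mathcal{S}_{\mathrm{l},m}$ and $\mathcal{S}_{\mathrm{g}}$. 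Crucially, while $k\in\mathcal{S}_{\mathrm{g}}$ every client pulls it (as $\mathcal{S}_{\mathrm{g}}\subseteq\mathcal{S}_m$), so the server estimate $\hat{\mu}_k(n)$ is always an average of $Mn$ i.i.d.\ samples, matching the $1/(Mn)$ inside $\alpha_{\mathrm{g}}$. Since $T_{\textsc{FedElim}0}=\sum_{k,m}N_{k,m}$, it suffices to prove $\tau^{\mathrm{l}}_{k,m}\le T_{k,m}$ and $\tau^{\mathrm{g}}_{k}\le T_k$ for all $k,m$; summation then yields \eqref{eq:high-prob-upper-bound-c-is-zero}.

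Next I would run the standard successive-elimination argument on $\mathcal{E}$, separately for the local and global instances. Locally, $k_m^*$ is never eliminated, since for any active arm $j$ we have $\hat{\mu}_{j,m}(n)-\hat{\mu}_{k_m^*,m}(n)\le(\mu_{j,m}+\alpha_{\mathrm{l}}(n))-(\mu_{k_m^*,m}-\alpha_{\mathrm{l}}(n))\le 2\alpha_{\mathrm{l}}(n)$, which stays at the elimination threshold rather than exceeding it. For a suboptimal $k\ne k_m^*$, using that $k_m^*$ remains active (so $\hat{\mu}_{*,m}(n)\ge\mu_{k_m^*,m}-\alpha_{\mathrm{l}}(n)$) and $\hat{\mu}_{k,m}(n)\le\mu_{k,m}+\alpha_{\mathrm{l}}(n)$, the observed gap is at least $\Delta_{k,m}-2\alpha_{\mathrm{l}}(n)$, which exceeds $2\alpha_{\mathrm{l}}(n)$ as soon as $\Delta_{k,m}\ge 4\alpha_{\mathrm{l}}(n)$. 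Squaring and substituting $\alpha_{\mathrm{l}}(n)^2=2\ln(8KMn^2/\delta)/n$ reduces this to the transcendental condition $n\,\Delta_{k,m}^2\ge 32\ln(8KMn^2/\delta)$. The identical computation for the global instance ($k^*$ never eliminated, suboptimal $k$ eliminated once $\Delta_k\ge 4\alpha_{\mathrm{g}}(n)$) gives $Mn\,\Delta_k^2\ge 32\ln(8Kn^2/\delta)$.

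The crux, and the step I expect to be the main obstacle, is inverting these implicit conditions into the explicit closed forms \eqref{eq:T_km}--\eqref{eq:T_k}. Writing the local condition as $n\ge\frac{32}{\Delta_{k,m}^2}\ln(8KM/\delta)+\frac{64}{\Delta_{k,m}^2}\ln n$, I would bound the smallest integer $n$ satisfying it via a logarithmic-inversion argument: using the elementary inequality $\ln n\le \eta n-\ln\eta-1$ (valid for all $\eta>0$) with $\eta$ proportional to $\Delta_{k,m}^2$, the offending $\ln n$ term is absorbed into a constant multiple of $n\,\Delta_{k,m}^2$, leaving an explicit threshold. The enlarged constant $102$ (rather than the naive $32$) is exactly what is needed to dominate the $64\ln n$ contribution; concretely, one verifies directly that $n=T_{k,m}$ satisfies the transcendental inequality, so elimination must have occurred by then, the additive $+1$ absorbing the ceiling and first-step off-by-one. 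The same inversion with $\Delta_{k,m}^2$ replaced by $M\Delta_k^2$ and $8KM/\delta$ by $8K/\delta$ gives $\tau^{\mathrm{g}}_{k}\le T_k$. I would separately check the large-gap regime, where the logarithm's argument falls below $1$: there the $+1$ guarantees the stated bound remains valid.

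Finally I would upgrade the bounds to the best arms. The arm $k_m^*$ leaves $\mathcal{S}_{\mathrm{l},m}$ only when it becomes the lone survivor, i.e.\ when the last suboptimal arm is eliminated, which on $\mathcal{E}$ is at most $\max_{j\ne k_m^*}T_{j,m}$; since $T_{k,m}$ is decreasing in $\Delta_{k,m}$ and $\Delta_{k_m^*,m}=\min_{k\ne k_m^*}\Delta_{k,m}$ by definition, this maximum equals $T_{k_m^*,m}$, giving $\tau^{\mathrm{l}}_{k_m^*,m}\le T_{k_m^*,m}$; the analogous statement holds globally for $k^*$ via $\Delta_{k^*}=\min_{k\ne k^*}\Delta_k$. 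Combining $N_{k,m}\le\max\{\tau^{\mathrm{l}}_{k,m},\tau^{\mathrm{g}}_{k}\}\le\max\{T_{k,m},T_k\}$ for all $k,m$ and summing over $k\in[K]$ and $m\in[M]$ yields $T_{\textsc{FedElim}0}\le\sum_{k,m}\max\{T_{k,m},T_k\}=T$ on $\mathcal{E}$, as claimed.
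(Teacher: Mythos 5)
Your proposal is correct and follows essentially the same route as the paper's proof: the same per-arm decomposition (your $N_{k,m}\le\max\{\tau^{\mathrm{l}}_{k,m},\tau^{\mathrm{g}}_{k}\}$ is the paper's $T^*_{k,m}=\max\{T^{(1)}_{k,m},T^{(2)}_{k}\}$), the same elimination conditions $\Delta_{k,m}\ge 4\alpha_{\mathrm{l}}(n)$ and $\Delta_{k}\ge 4\alpha_{\mathrm{g}}(n)$ on $\mathcal{E}$, and the same handling of the best arms via $\Delta_{k_m^*,m}=\min_{k\ne k_m^*}\Delta_{k,m}$ and $\Delta_{k^*}=\min_{k\ne k^*}\Delta_{k}$. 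The one step where you genuinely deviate is the inversion of the transcendental condition $n\Delta_{k,m}^2\ge 32\ln(8KMn^2/\delta)$: the paper writes the exact larger root via the Lambert function, $n_{k,m}=\bigl\lceil -\tfrac{1}{a}W_{-1}(-ae^{-b})\bigr\rceil$ with $a=\Delta_{k,m}^2/64$ and $b=\tfrac12\ln(8KM/\delta)$, and then invokes the cited estimate $W_{-1}(y)>\tfrac{e}{e-1}\ln(-y)$, whereas you absorb the $\ln n$ term with the tangent-line inequality $\ln n\le \eta n-\ln\eta-1$. These are equivalent in outcome: choosing $\eta=\Delta_{k,m}^2/(64e)$ in your inequality shows that every $n\ge \tfrac{e}{e-1}\cdot\tfrac{b-\ln a}{a}$ satisfies the condition, which is exactly the paper's bound and reproduces the constant $\tfrac{64e}{e-1}\le 102$; your version is more self-contained (no external Lambert-$W$ bound needed) at the cost of guessing the right $\eta$. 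One caveat: in the extreme large-gap regime $\Delta_{k,m}^2>64\sqrt{8KM/\delta}$, where the logarithm's argument is below $1$, one has $T_{k,m}<1$ while each active arm is still pulled once, so the $+1$ does not rescue the per-arm bound as you claim; this is a shared blemish, since the paper's proof silently ignores the same regime (its $n_{k,m}$ is then a maximum over an empty set), and you at least flag it explicitly.
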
 
We show in the proof that on the event $\mathcal{E}$, the total number of arm selections under $\textsc{FedElim}0$ required to identify arm $k$ of client $m$ as the client's local best arm or otherwise, say $T_{k,m}^{(1)}$, is upper bounded by $T_{k,m}$ for all $k\in [K]$ and $m\in [M]$. To establish the preceding result, we use the fact that $\alpha_{\mathrm{l}}(n) \to 0$ as $n\to\infty$, and look for the smallest integer $n$ such that $\alpha_{\mathrm{l}}(n) \leq \Delta_{k,m}^2/4$; call this $n_{k,m}$. We argue that $T_{k,m}^{(1)} \leq n_{k,m}$ on the event $\mathcal{E}$, and subsequently show that $T_{k,m}$ is an upper bound for $n_{k,m}$. A similar procedure as above is used to upper bound the total number of arm pulls required to identify arm $k$ as being the global best arm or otherwise at the server. Combining the two upper bounds and noting that the event $\mathcal{E}$ occurs with probability at least $1-\delta$, we arrive at \eqref{eq:high-prob-upper-bound-c-is-zero}.

The next result shows that the upper bound in \eqref{eq:high-prob-upper-bound-c-is-zero} is tight up to a constant factor.

\begin{theorem}\label{thm:lb1_c0}
Given $\delta \in (0,1)$ and a $\delta$-PAC algorithm $\pi$, let $T_\delta^\pi$ denote the total number of arm selections under $\pi$ required to find the local best arms and the global best arm when the clients and the server communicate in every time step. Under the problem instance $\boldsymbol{\mu}$, 
\begin{equation}
	\inf_{\pi\text{ is }\delta\text{-PAC}}\ \mathbb{E}_{\boldsymbol{\mu}}^\pi [T_{\delta}^{\pi}] \geq \sum_{k=1}^{K}\sum_{m=1}^{M}\max\left\lbrace \frac{\ln(\frac{1}{2.4\delta})}{\Delta_{k,m}^2},\frac{\ln(\frac{1}{2.4\delta})}{M^2\Delta_{k}^2} \right \rbrace,
	\label{eq:lower-bound-1}
\end{equation}
where in \eqref{eq:lower-bound-1}, $\mathbb{E}_{\boldsymbol{\mu}}^\pi[\cdot]$ denotes the expectation under the algorithm $\pi$ and the problem instance $\boldsymbol{\mu}$.
\end{theorem}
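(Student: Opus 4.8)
The plan is to prove this as a standard fixed-confidence (change-of-measure) lower bound, exploiting the fact that when communication is free and occurs in every time step the federated system is informationally equivalent to a single centralised bandit with $KM$ arms, one for each pair $(k,m)$, where pulling ``arm'' $(k,m)$ produces one $\mathcal{N}(\mu_{k,m},1)$ sample. Writing $N_{k,m}$ for the (random) number of pulls of arm $(k,m)$ before termination, we have $T_\delta^\pi=\sum_{k,m}N_{k,m}$, so it suffices to lower bound each $\mathbb{E}_{\boldsymbol{\mu}}^\pi[N_{k,m}]$ and then sum. The engine is the transportation inequality of Kaufmann, Capp\'e and Garivier: for any $\delta$-PAC $\pi$ and any alternative instance $\boldsymbol{\lambda}$ with $\mathbf{S}(\boldsymbol{\lambda})\neq\mathbf{S}(\boldsymbol{\mu})$,
\[
\sum_{k,m}\mathbb{E}_{\boldsymbol{\mu}}^\pi[N_{k,m}]\,\mathrm{KL}\big(\mathcal{N}(\mu_{k,m},1),\mathcal{N}(\lambda_{k,m},1)\big)\ \geq\ \mathrm{kl}(\delta,1-\delta)\ \geq\ \ln\tfrac{1}{2.4\delta},
\]
where $\mathrm{kl}$ denotes the binary relative entropy and the last inequality is the usual elementary estimate. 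For unit-variance Gaussians $\mathrm{KL}(\mathcal{N}(a,1),\mathcal{N}(b,1))=(a-b)^2/2$.

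To isolate $\mathbb{E}_{\boldsymbol{\mu}}^\pi[N_{k,m}]$, for each fixed $(k,m)$ I would take $\boldsymbol{\lambda}$ differing from $\boldsymbol{\mu}$ only in the $(k,m)$ entry, so the sum collapses to the single term $\mathbb{E}_{\boldsymbol{\mu}}^\pi[N_{k,m}]\,(\mu_{k,m}-\lambda_{k,m})^2/2$. Two perturbations are relevant. \emph{(i) Flipping the local best arm of client $m$:} if $k\neq k_m^*$ raise $\mu_{k,m}$ past $\mu^*_m$, and if $k=k_m^*$ lower it past the runner-up; in either case the smallest perturbation has magnitude $\Delta_{k,m}$ (recall $\Delta_{k_m^*,m}=\min_{j\neq k_m^*}\Delta_{j,m}$), and the new $\boldsymbol{\lambda}$ has a different local answer, hence $\mathbf{S}(\boldsymbol\lambda)\neq\mathbf{S}(\boldsymbol\mu)$. \emph{(ii) Flipping the global best arm:} since $\mu_k=\frac1M\sum_m\mu_{k,m}$, moving the single entry $\mu_{k,m}$ by $M\Delta_k$ shifts $\mu_k$ by $\Delta_k$, exactly enough to change the global argmax (raising $\mu_k$ past $\mu^*$ if $k\neq k^*$, lowering it past the runner-up if $k=k^*$), a perturbation of magnitude $M\Delta_k$. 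Substituting each perturbation (taken as $\Delta_{k,m}+\varepsilon$, resp.\ $M\Delta_k+\varepsilon$, and letting $\varepsilon\downarrow0$, which simultaneously restores uniqueness of the optima in $\boldsymbol\lambda$) into the collapsed inequality yields $\mathbb{E}_{\boldsymbol{\mu}}^\pi[N_{k,m}]\geq 2\ln(\tfrac1{2.4\delta})/\Delta_{k,m}^2$ and $\mathbb{E}_{\boldsymbol{\mu}}^\pi[N_{k,m}]\geq 2\ln(\tfrac1{2.4\delta})/(M^2\Delta_k^2)$. Discarding the factor $2$ and taking the larger of the two gives $\mathbb{E}_{\boldsymbol{\mu}}^\pi[N_{k,m}]\geq\max\{\ln(\tfrac1{2.4\delta})/\Delta_{k,m}^2,\ \ln(\tfrac1{2.4\delta})/(M^2\Delta_k^2)\}$; summing over $k\in[K]$, $m\in[M]$ and taking the infimum over $\delta$-PAC $\pi$ yields \eqref{eq:lower-bound-1}.

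The main work lies in justifying the transportation inequality in this model rather than in the perturbation bookkeeping. I would verify its hypotheses in the federated/every-step-communication setting: that the event $\{\hat{\mathbf S}=\mathbf S(\boldsymbol\mu)\}$ is measurable with respect to the observed data; that $\pi$ being $\delta$-PAC on \emph{both} $\boldsymbol\mu$ and $\boldsymbol\lambda$ drives the log-likelihood-ratio/data-processing step through $\mathrm{kl}(1-\delta,\delta)=\mathrm{kl}(\delta,1-\delta)$; and that each pull contributes exactly the Gaussian KL, since communicating the full sample loses no information relative to the centralised bandit, so the every-step-communication assumption is precisely what makes the centralised reduction exact. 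The remaining care points—checking that each one-coordinate alternative is a legitimate instance with unique local and global optima (handled by the $\varepsilon\downarrow0$ limit and continuity of the KL), and invoking $\mathrm{kl}(\delta,1-\delta)\geq\ln\frac{1}{2.4\delta}$—are routine. Note that the clean constant in \eqref{eq:lower-bound-1} results from deliberately discarding the factor $2$, which is immaterial for the claimed tightness ``up to a constant factor'' against the upper bound of Theorem~\ref{thm:FLSEA_upperbound}.
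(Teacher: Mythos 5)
Your proposal is correct and follows essentially the same route as the paper's proof: the Kaufmann--Capp\'e--Garivier transportation inequality with unit-variance Gaussian KL $(\mu-\mu')^2/2$, single-coordinate alternative instances that flip the local best arm (perturbation $\Delta_{k,m}+\varepsilon$) or the global best arm (perturbation $M\Delta_k+\varepsilon$, since shifting one entry by $M\Delta_k$ moves the average $\mu_k$ by $\Delta_k$), followed by $\varepsilon\downarrow 0$ and summing the per-arm maxima. The paper even retains the factor of $2$ in its final display (so it proves a slightly stronger bound than stated), and your observation that discarding it yields the theorem exactly mirrors that step.
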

The proof of Theorem \ref{thm:lb1_c0} is based on the   transportation lemma \cite[Lemma~1]{kaufmann2016complexity} which combines a certain change of measure technique and Wald's identity for {\em i.i.d.} processes.

\begin{remark}
Theorems \ref{thm:FLSEA_upperbound} and \ref{thm:lb1_c0} together provide a fairly tight characterisation of the total number of arm selections under the optimal algorithm in the class of all algorithms that communicate in every time step. They  show that $\textsc{FedElim}0$ is almost optimal in this class. Neglecting the logarithm terms and the constants, the key difference between the upper and lower bounds manifests in the second term in the maximum in \eqref{eq:lower-bound-1}, in which there is an additional factor of $M$ in the denominator. When $M$ is a constant or if $M$ is so large so that $\Delta_{k,m}\le\sqrt{M}\ \Delta_k$ for all $(k,m) \in [K]\times [M]$ (a typical federated learning scenario in which the number of clients  $M$ is large),  Theorems~\ref{thm:FLSEA_upperbound} and~\ref{thm:lb1_c0} are tight up to log factors.  
\end{remark}

 \subsection{Performance of {\sc FedElim} with Uplink Cost}
 
We now present a high-probability upper bound on the total cost (i.e., the sum of the total number of arm pulls and the total communication cost) of {\sc FedElim} for any $C \ge 0$. Given a problem instance $\boldsymbol{\mu}$, for each $k\in [K]$ and $m\in [M]$, let $T_{k,m}$ and $T_k$ be as defined in \eqref{eq:T_km} and \eqref{eq:T_k} respectively. 
 
 
\begin{theorem}\label{thm:FLSEA_withcost}
Fix a problem instance $\boldsymbol{\mu}$, uplink cost $C \ge 0$, and $\delta \in (0,1)$ such that $C \ln T_k \le T_k$ for all $k\in [K]$. 
Let $T_{\textsc{FedElim}}^C$, $C_{\textsc{FedElim}}^{\mathrm{comm}}$, and $C_{\textsc{FedElim}}^{\mathrm{total}}$ denote respectively the total number of arm selections, the communication cost, and the total cost of {\sc FedElim} towards identifying the local best arms and the global best arm. On the event $\mathcal{E}$ defined in~\eqref{eq:event-E}, the following  inequalities hold (with $T$ as defined in \eqref{eq:high-prob-upper-bound-c-is-zero}):
\begin{align}
	T_{\textsc{FedElim}}^C &\leq \sum_{k=1}^{K}\ \sum_{m=1}^{M} \max\{T_{k,m},\ 2 \,T_k\} \leq 2\,T, \label{eqn:first_line}\\
	C_{\textsc{FedElim}}^{\mathrm{comm}} & \leq C\cdot M \cdot \sum_{k=1}^{K} \left\lceil \frac{\ln T_k }{\ln 2 } \right\rceil,\label{eq:comm-cost-under-FedElim}\\
	C_{\textsc{FedElim}}^{\mathrm{total}} & = T_{\textsc{FedElim}}^C+C_{\textsc{FedElim}}^{\mathrm{comm}} \leq 3 \, T. \label{eq:total-cost-FedElim}
\end{align}
 \end{theorem}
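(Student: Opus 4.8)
The plan is to leverage the analysis of $\textsc{FedElim}0$ from Theorem~\ref{thm:FLSEA_upperbound} and to quantify the single difference between the two algorithms: in {\sc FedElim}, global eliminations can occur only at the communication rounds $n=2^t$, $t\in\mathbb{N}_0$. Following the proof of Theorem~\ref{thm:FLSEA_upperbound}, let $n_{k,m}$ and $n_k$ denote respectively the first time steps at which the local and global confidence widths $\alpha_{\mathrm{l}}$ and $\alpha_{\mathrm{g}}$ drop below a quarter of the corresponding gaps $\Delta_{k,m}$ and $\Delta_k$; on $\mathcal{E}$ the best arms are never eliminated, so arm $k$ leaves $\mathcal{S}_{\mathrm{l},m}$ by time $n_{k,m}\le T_{k,m}$ and (under $\textsc{FedElim}0$) leaves $\mathcal{S}_{\mathrm{g}}$ by time $n_k\le T_k$. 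Since the elimination gap condition, once met, persists for all larger $n$, under {\sc FedElim} arm $k$ leaves $\mathcal{S}_{\mathrm{g}}$ at the first power of two at least $n_k$, namely at time $2^{\lceil\log_2 n_k\rceil}\le 2n_k\le 2T_k$.

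For the arm-selection bound~\eqref{eqn:first_line}, I would note that client $m$ pulls arm $k$ exactly at those steps for which $k\in\mathcal{S}_{\mathrm{l},m}\cup\mathcal{S}_{\mathrm{g}}$, i.e.\ until arm $k$ is eliminated both locally (by $n_{k,m}\le T_{k,m}$) and globally (by $2^{\lceil\log_2 n_k\rceil}\le 2T_k$). Hence the number of pulls of arm $k$ at client $m$ is at most $\max\{T_{k,m},\,2T_k\}$. Summing over $(k,m)$ and using $\max\{T_{k,m},2T_k\}\le 2\max\{T_{k,m},T_k\}$ gives $T_{\textsc{FedElim}}^C\le\sum_{k,m}\max\{T_{k,m},2T_k\}\le 2T$.

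For the communication bound~\eqref{eq:comm-cost-under-FedElim}, communication occurs only at rounds $n=2^t$ while $|\mathcal{S}_{\mathrm{g}}|>1$, and each such round costs $CM|\mathcal{S}_{\mathrm{g}}|$; thus the total communication cost equals $CM\sum_k(\text{number of rounds during which }k\in\mathcal{S}_{\mathrm{g}})$. A suboptimal arm $k$ participates in rounds $t=0,1,\dots,\lceil\log_2 n_k\rceil$, and the global best arm participates until the last suboptimal arm is removed; invoking $n_k\le T_k/2$ (which follows from the constant $102$ in the definition of $T_k$) this is at most $\lceil\log_2 T_k\rceil=\lceil\ln T_k/\ln 2\rceil$ rounds per arm, which yields~\eqref{eq:comm-cost-under-FedElim}.

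Finally, for~\eqref{eq:total-cost-FedElim} I would add the two bounds, so that it suffices to show the communication cost is itself at most $T$. Since $T\ge M\sum_k T_k$, this reduces to a per-arm comparison of $C\lceil\log_2 T_k\rceil$ against $T_k$, which is precisely where the hypothesis $C\ln T_k\le T_k$ enters: the communication overhead per arm grows only logarithmically in its (linear) arm-pull budget $T_k$, so the single extra factor of $T$ in passing from $2T$ to $3T$ absorbs it, giving $C_{\textsc{FedElim}}^{\mathrm{total}}\le 2T+T=3T$. \textbf{The main obstacle} is the careful bookkeeping of the sparse-communication delay: showing that restricting eliminations to the times $2^t$ inflates the global stopping time by a factor of at most two (the $2T_k$ term) and caps the number of rounds at $\lceil\log_2 T_k\rceil$, and then verifying quantitatively (through the relation between $n_k$, $T_k$, and $\log_2 T_k$ forced by the constant $102$ and the hypothesis) that this logarithmic overhead is dominated by the arm-pull budget so that the constant $3$ indeed holds.
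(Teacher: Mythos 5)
Your proposal follows the paper's proof essentially step for step: local elimination times are unaffected by the sparse communication and are bounded by $T_{k,m}$; the global elimination of arm $k$ is delayed to the first power of two after the $\textsc{FedElim}0$ elimination time, hence bounded by $2T_k$; the per-arm pull count is $\max\{T_{k,m},2T_k\}\le 2\max\{T_{k,m},T_k\}$, giving~\eqref{eqn:first_line}; the communication cost is counted as roughly $\log_2 T_k$ rounds per arm per client, giving~\eqref{eq:comm-cost-under-FedElim}; and the hypothesis $C\ln T_k\le T_k$ absorbs the communication cost into one extra factor of $T$, giving~\eqref{eq:total-cost-FedElim}. This is exactly the paper's argument (the paper denotes your $2^{\lceil\log_2 n_k\rceil}$ by $T_k^{(2),C}=2^{t^*}$ and argues $2^{t^*-1}<T_k^{(2)}\le T_k$).

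One sub-claim in your round-counting is, however, false as justified: you assert $n_k\le T_k/2$ ``which follows from the constant $102$ in the definition of $T_k$.'' It does not. The constant $102$ in \eqref{eq:T_k} is chosen only to dominate $\tfrac{64e}{e-1}\approx 101.3$, so the paper's Lambert-$W$ bound gives $n_k\le T_k$ with essentially no slack, and certainly not a factor of $2$. You invoke this claim solely to compress the round count from $\lceil\log_2 n_k\rceil+1$ (rounds $t=0,1,\dots,\lceil\log_2 n_k\rceil$) down to $\lceil\log_2 T_k\rceil$; without it your argument yields only $C_{\textsc{FedElim}}^{\mathrm{comm}}\le C\cdot M\cdot\sum_{k=1}^{K}\bigl(\lceil\log_2 T_k\rceil+1\bigr)$, i.e.\ an extra additive $C\,M\,K$. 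For what it is worth, the paper elides the same off-by-one in the opposite direction: it lists transmissions at $n=1,2,\dots,2^{t^*}$ (which is $t^*+1$ of them) but charges only $C\,t^*$ per arm per client. So your bookkeeping is actually the more honest one; the correct fix is not the false claim $n_k\le T_k/2$ but simply carrying the $+1$ through, where it is harmless at the level of the constant $3$ in \eqref{eq:total-cost-FedElim} (both you and the paper already tolerate comparable constant-level slack there, e.g.\ in passing from the hypothesis $C\ln T_k\le T_k$ to $C\lceil\ln T_k/\ln 2\rceil\le T_k$, which ignores the factor $1/\ln 2$).
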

 Notice that the maxima in \eqref{eq:high-prob-upper-bound-c-is-zero} and that in \eqref{eqn:first_line} are identical up to the constant~$2$. Intuitively, the extra factor of~$2$ arises because if a candidate arm $k$ is not eliminated in time step $n=2^t$ but is eliminated in time step $n=2^{t+1}$ for some $t \in \mathbb{N}_0$, then it must be the case that $2^{t+1} \le  2\,T_k$, and therefore the total number of arm selections is at most $2\,T_k$.  
 
It is no coincidence that the constant $2$ appears inside the maximum in~\eqref{eqn:first_line} and also in the denominator in \eqref{eq:comm-cost-under-FedElim}. In fact, exponential sparse communication in time steps $n=\lceil\lambda^t\rceil$ for $t\in \mathbb{N}_0$ and $\lambda>0$, results in $\lambda$ replacing $2$ in  both \eqref{eqn:first_line} and \eqref{eq:comm-cost-under-FedElim}. Then, optimising the sum of the $\lambda$-analogues of the right hand sides of \eqref{eqn:first_line} and \eqref{eq:comm-cost-under-FedElim}, we may arrive at a fairly tight upper bound on the total cost, i.e., the $\lambda$-analogue of~\eqref{eq:total-cost-FedElim}. However, the optimal $\lambda$ is, in general, a function of $C$ and the problem instance-specific sub-optimality gaps which are {\em unknown} in most practical settings. Therefore, we do away with finding the optimal $\lambda$ and instead use $\lambda=2$. For a more detailed discussion, see the supplementary material.

 \begin{remark}\label{rmk:independent_C}
      The key takeaway result of our paper, presented in inequality~\eqref{eq:total-cost-FedElim}, shows that the total number of arm selections (resp.\ total cost) of {\sc FedElim} is at most $2$ (resp.~$3$) times $T$. These multiplicative gaps of $2$ and $3$ do not depend on~$C$. In contrast, for periodic communication \citep{mitra2021exploiting} with period $H$, it can be shown that the multiplicative gap for the total cost is $1+C/H$, which does depend on the per usage communication cost $C$. 
 \end{remark}

\section{Numerical Results}
\label{sec:experiments}
\begin{figure*}[!t]
     \centering
     \begin{subfigure}[b]{1.275\textwidth/4}
         \centering
         \includegraphics[width=0.87\textwidth]{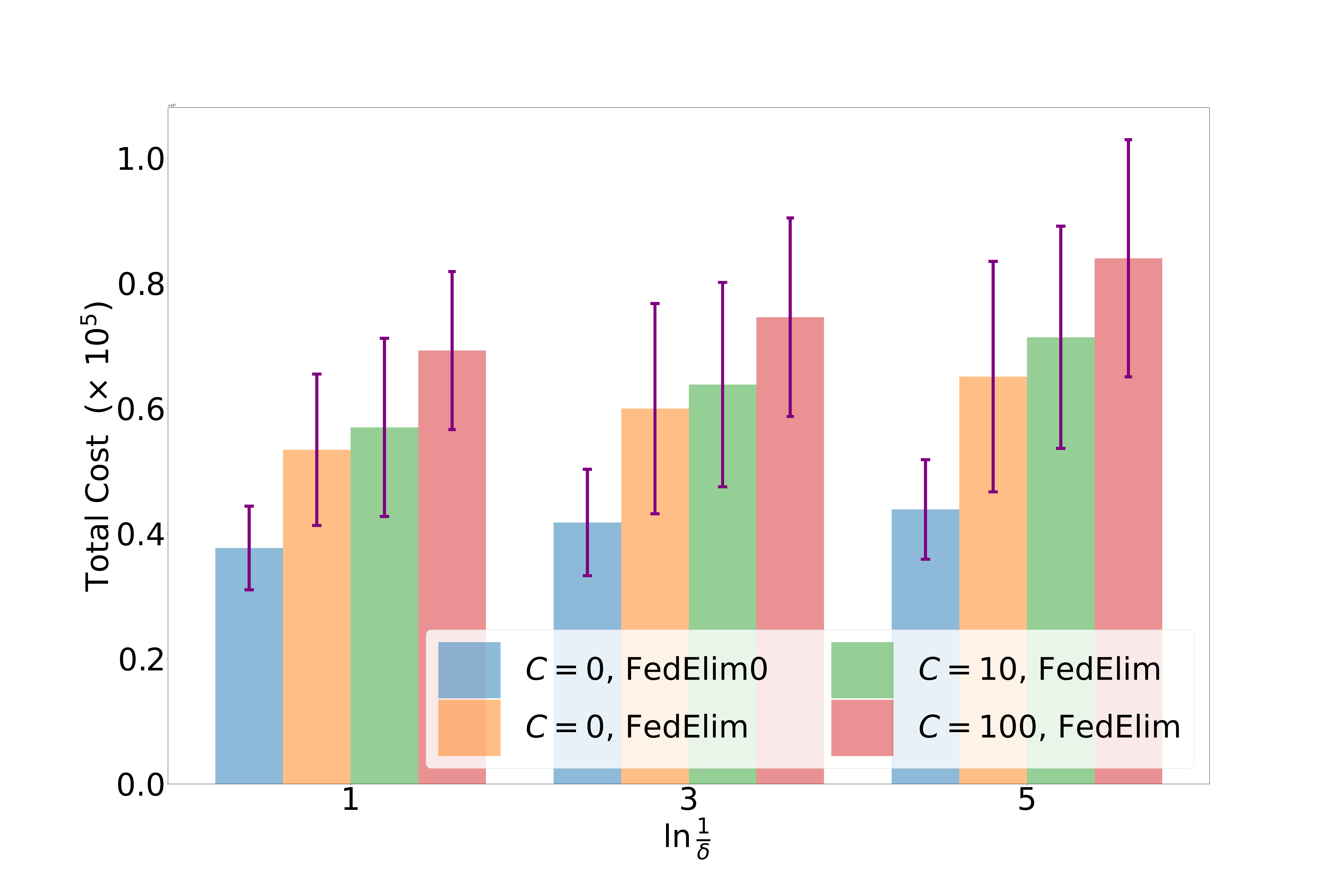}
         \caption{A plot comparing the  number of arm pulls of $\textsc{FedElim}0$ with the  cost of {\sc FedElim} for $C\in \{0, 10, 100\}$. $\textsc{FedElim}0$ has a lower cost compared to {\sc FedElim} when $C=0$.}
         \label{subfig:fedelim-total-cost-and-arm-pulls}
     \end{subfigure}
     \hspace{.05in}
     \begin{subfigure}[b]{1.275\textwidth/4}
         \centering
         \includegraphics[width=0.87\textwidth]{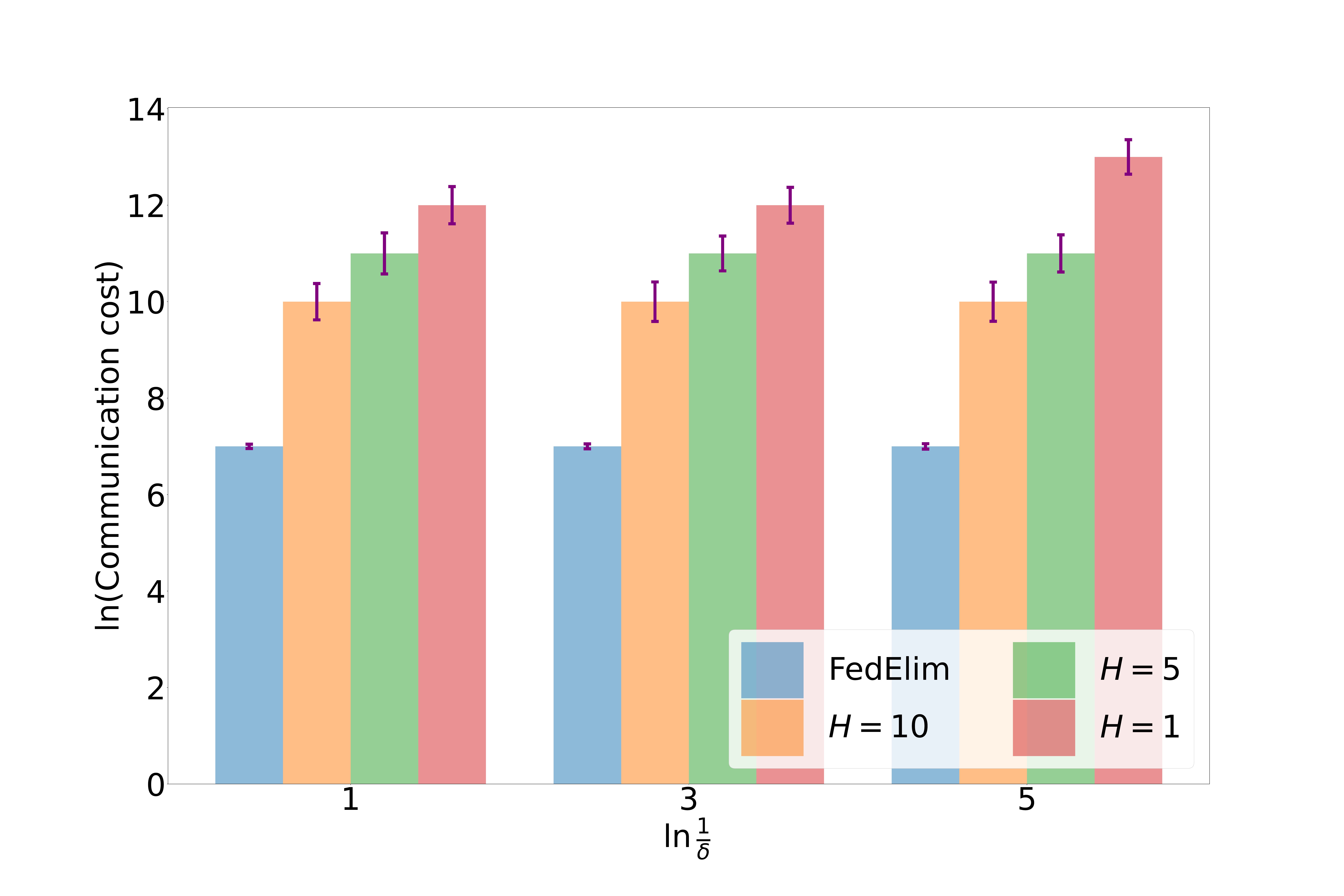}
         \caption{A   plot comparing of the communication cost  incurred under the proposed {\sc FedElim} algorithm and under periodic communication with period \mbox{$H \in \{1,5,10\}$.}}
         \label{subfig:comm-cost-comparison}
     \end{subfigure}
     \hspace{.05in}
     \begin{subfigure}[b]{1.275\textwidth/4}
         \centering
         \includegraphics[width=0.87\textwidth]{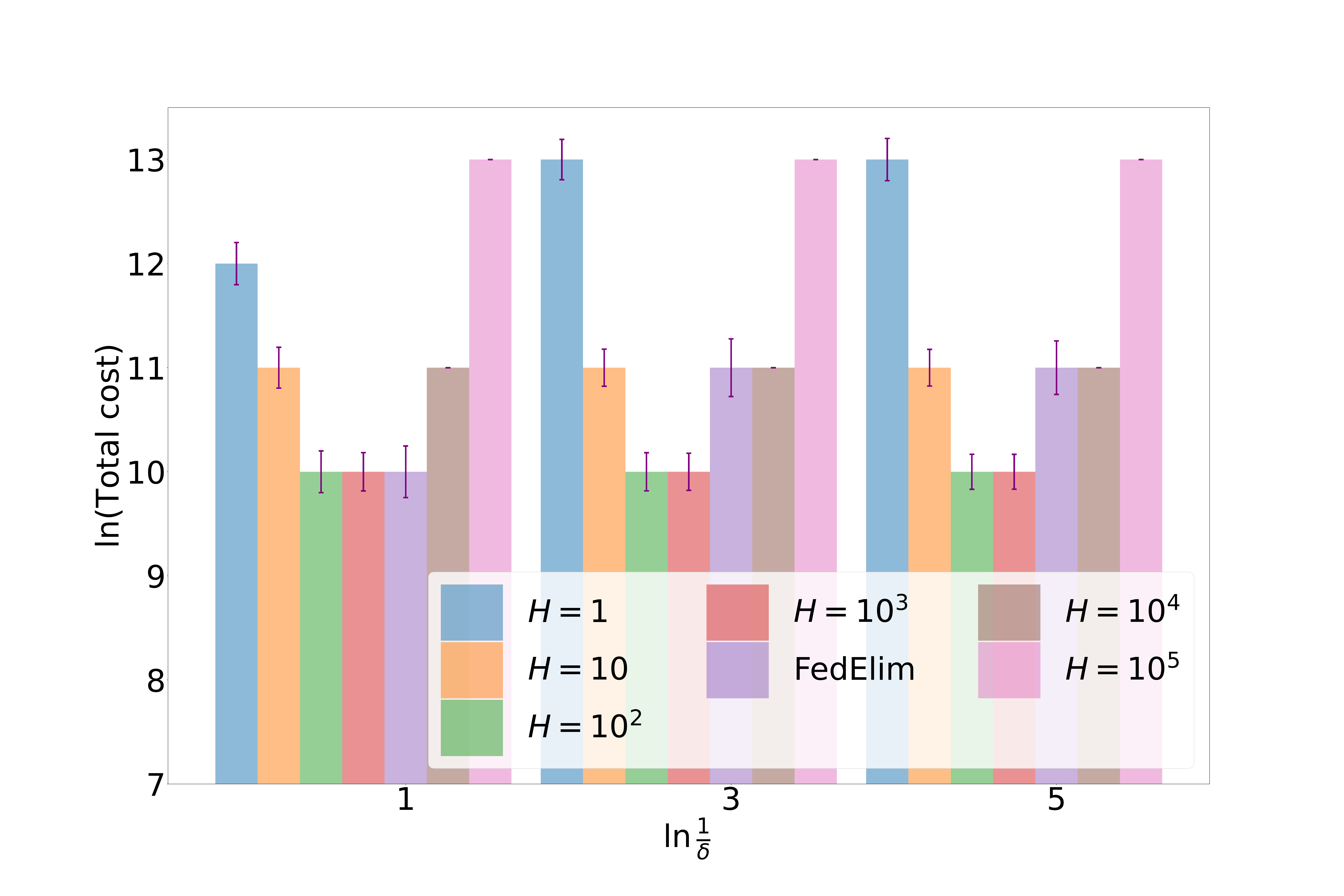}
         \caption{A   plot  comparing  the     cost     under  {\sc FedElim}   and    periodic communication with period $H =10^p$ for $p\in \{0,\ldots,5\}$.   {\sc FedElim} almost attains the minimum w/o knowing~$C$. }
         \label{subfig:total-cost-comparison}
     \end{subfigure}
        \caption{Numerical results on the synthetic dataset with the problem instance in \eqref{eqn:syn}.}
        \label{fig:three-figures}
\end{figure*} 

\begin{figure*}[!t]
     \centering
     \begin{subfigure}[b]{1.275\textwidth/4}
         \centering
         \includegraphics[width=0.87\textwidth]{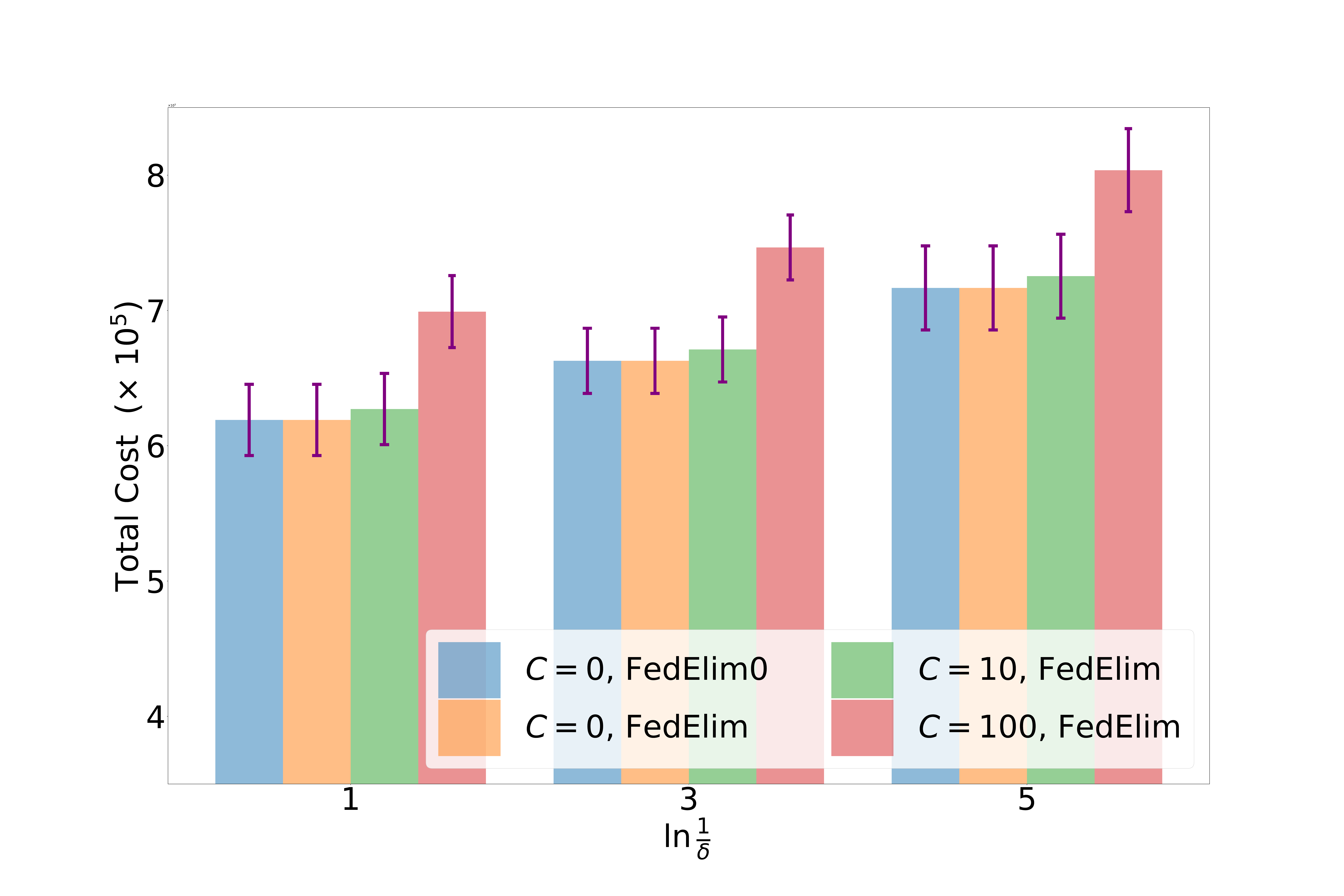}
         \caption{Plot analogous to Figure~\ref{subfig:fedelim-total-cost-and-arm-pulls}.}
         \label{subfig:fedelim-total-cost-and-arm-pulls-movielens}
     \end{subfigure}
     \hspace{.05in}
     \begin{subfigure}[b]{1.275\textwidth/4}
         \centering
         \includegraphics[width=0.87\textwidth]{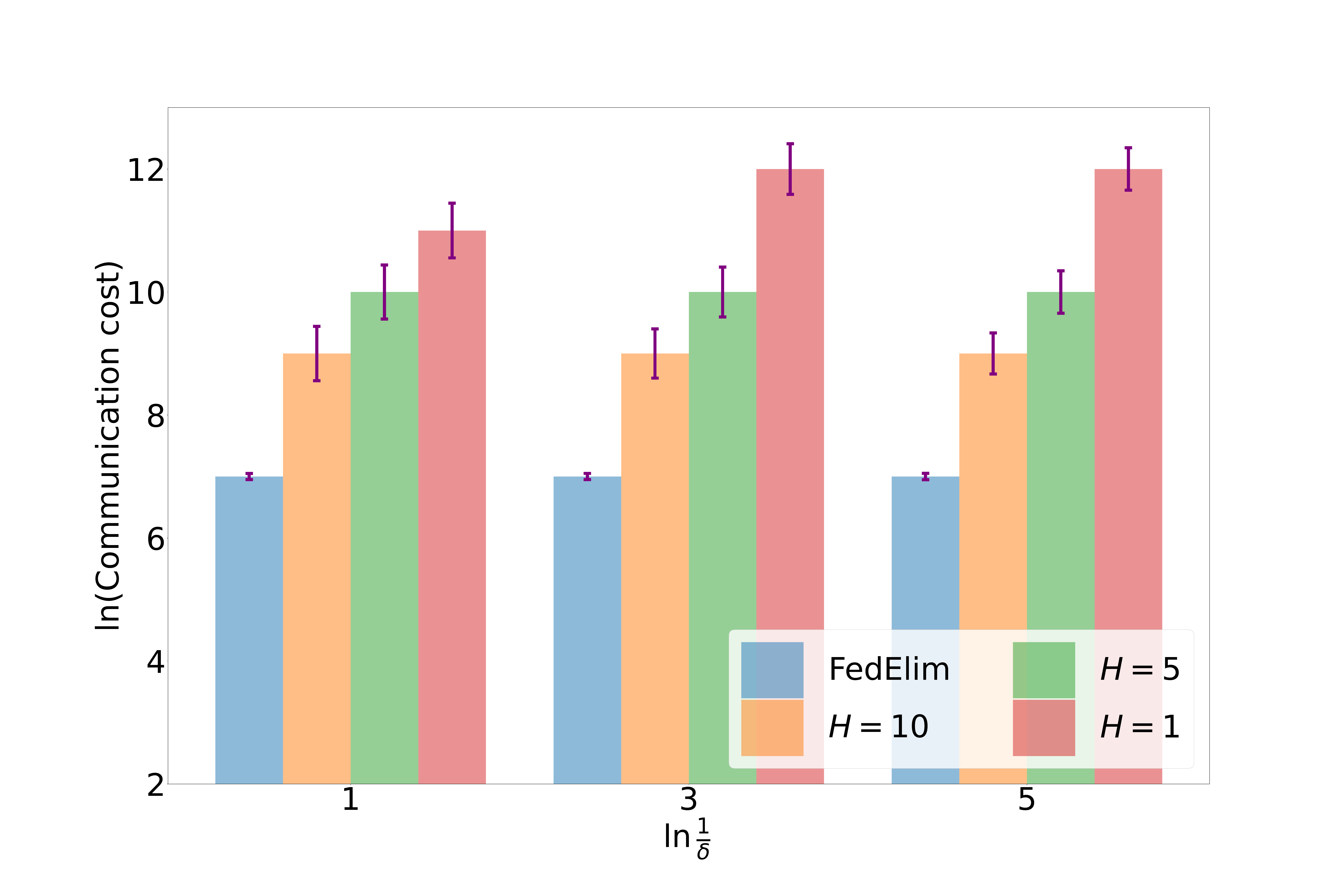}
         \caption{Plot analogous to Figure~\ref{subfig:comm-cost-comparison} with $C=10$.}
         \label{subfig:comm-cost-comparison-movielens}
     \end{subfigure}
     \hspace{.05in}
     \begin{subfigure}[b]{1.275\textwidth/4}
         \centering
         \includegraphics[width=0.87\textwidth]{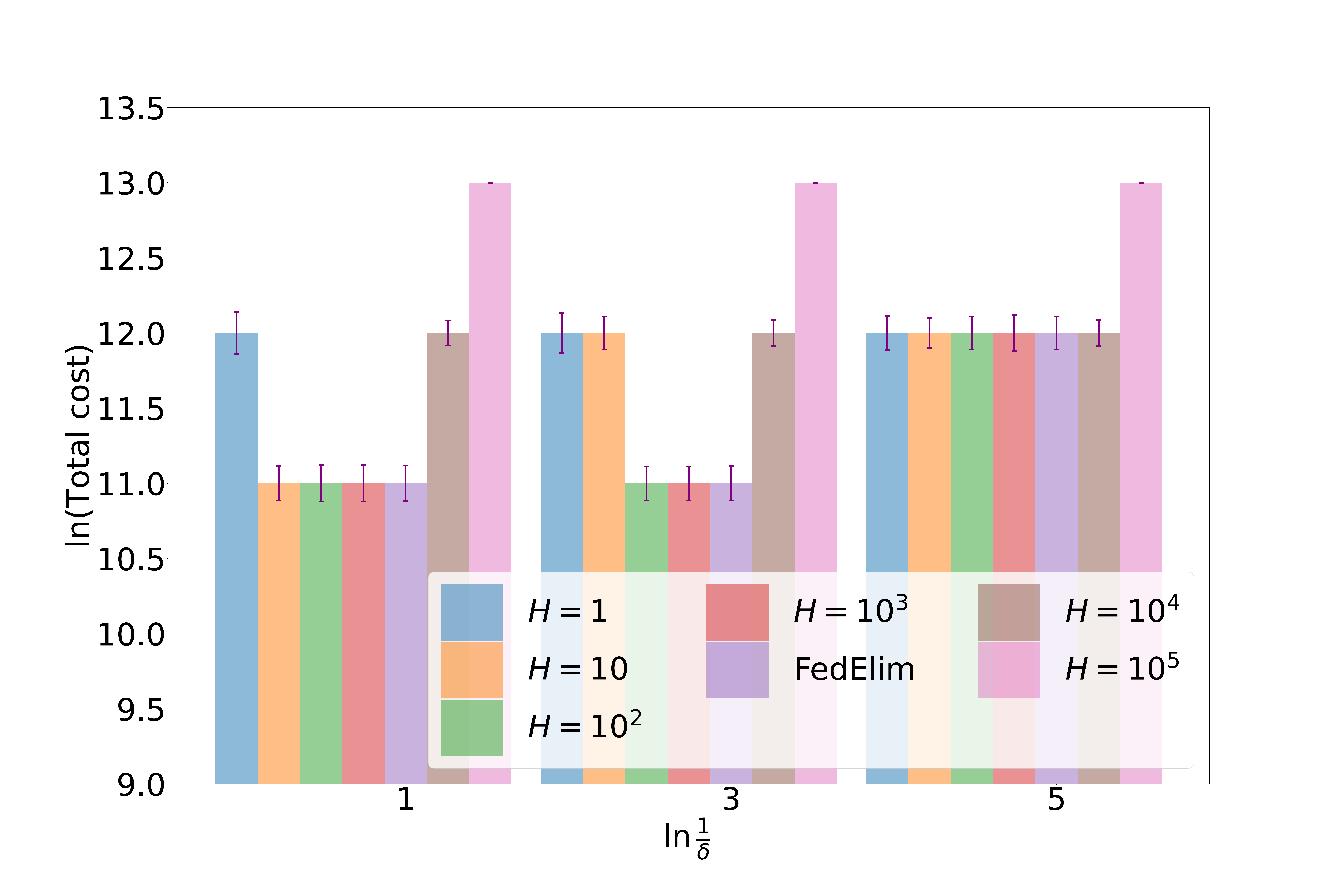}
         \caption{Plot analogous to Figure~\ref{subfig:total-cost-comparison} with $C=10$.}
         \label{subfig:total-cost-comparison-movielens}
     \end{subfigure}
        \caption{Numerical results on the MovieLens dataset.}
        \label{fig:three-figures_movielens}
\end{figure*} 
In this section, we present numerical results on the performance of {\sc FedElim} (and {\sc FedElim0}). We consider two synthetic datasets and one real-world dataset. 
\subsection{Experiments on a Synthetic Dataset}
First, we discuss our numerical results on a stylized synthetic dataset. We consider the problem instance
\begin{equation}
    \boldsymbol{\mu} = \begin{bmatrix}
	0.9 & 0.1 & 0.1\\
	0.1 & 0.9 & 0.1 \\
	0.1 & 0.1 & 0.9 \\
	0.5 & 0.5 & 0.5
\end{bmatrix} \in \mathbb{R}^{4 \times 3}. \label{eqn:syn}
\end{equation}
Notice that arm $m$ is the local best arm of client $m$ for each $m\in [3]$, whereas arm $4$ is the global best arm. 
Figure~\ref{fig:three-figures} shows a summary of the results obtained after averaging across $100$ independent trials. The error bars show $\pm 1$ standard deviation away from the mean.
Theorem \ref{thm:FLSEA_withcost} states that for  {\sc FedElim}, the total cost when $C>0$ is at most three times that when $C=0$. Figure~\ref{subfig:fedelim-total-cost-and-arm-pulls} strongly corroborates this. It shows the total cost for {\sc FedElim} for various values of $C$ as well as {\sc FedElim0}. We observe that for each fixed value of $\delta$, the total cost of {\sc FedElim}   is at most three times that of {\sc FedElim0} regardless of the value of $C$. In fact, the multiplicative factor three is conservative as empirically observed from  Figure~\ref{subfig:fedelim-total-cost-and-arm-pulls}. 

In Figure~\ref{subfig:comm-cost-comparison}, we compare the communication cost of {\sc FedElim} and periodic communication  \citep{mitra2021exploiting}. First, we see that as $H$ increases, the communication cost decreases as expected. Second, we observe that the communication cost of {\sc FedElim} is significantly smaller than that of the  periodic communication schemes. 

In Figure~\ref{subfig:total-cost-comparison}, we compare the total cost (defined in \eqref{eq:cost-of-an-algorithm}) of {\sc FedElim} and periodic communication  with period $H = 10^p$ for $p\in\{0, \ldots,  5\}$. We observe  that, per Remark~\ref{rmk:knowledge_of_C}, for periodic communication, there is a ``sweet spot'' for $H$, which, in this case, occurs at around $H=10^2$. On the other hand, {\sc FedElim} does almost as well as the best periodic communication scheme for the optimal $H$. {\sc FedElim}  is, however, agnostic to the cost $C$, which is set to $10$ here. More experimental results, specifically on a synthetic dataset of Bernoulli observations used in \citep{mitra2021exploiting}, is available in the supplemental material.

\subsection{Experiments on the MovieLens Dataset}
In addition to two synthetic datasets, we also run our algorithm on a large-scale subsampled version of the MovieLens dataset \citep{cantador2011second} crafted so as to simulate heterogeneity among the clients.
Specifically, we extract a subset of the MovieLens dataset containing movies that were produced in $38$ different countries and across $20$ different genres, resulting in a total of $8,636$ movies and about~$2.04$ million ratings (details are in the supplementary material). We then set the countries and genres to be in one-to-one correspondence with the clients (so $M=38$) and arms (so $K=20$), respectively. Figure~\ref{fig:three-figures_movielens} shows the results of running {\sc FedElim} on this dataset. The qualitative behaviours of {\sc FedElim}, {\sc FedElim0} and the strategy that communicates with the server periodically match with those for the synthetic dataset. 
The highlight of Figure~\ref{subfig:total-cost-comparison-movielens} is that {\sc FedElim} attains the {\em absolute} minimum among all schemes that communicate periodically to the server, thus  incontrovertibly demonstrating the ability of {\sc FedElim} to effectively balance communication and the number of arm selections on a real-world, large-scale dataset. 

\section{Summary and Future Work}
We have designed and analyzed an algorithm called {\sc FedElim} that is proven to be effective in learning the local best arms and the global best arm in the context of a federated learning setting. What we have not taken into account is the fact that communication to the server typically requires quantization of the empirical means at each of the clients. Elucidating the tradeoff between the number of bits used and the number of arm pulls  (cf.\ \citet{hanna2022solving}) is a promising area for future research. 

\section*{Acknowledgements}
This research/project is supported by the National Research Foundation (NRF) Singapore and DSO National Laboratories under the AI Singapore Programme (AISG Award No: AISG2-RP-2020-018) and by an NRF Fellowship (Grant No:  A-0005077-01-00).

\bibliography{aaai23-FLMAB}

\onecolumn

\begin{center}
    {\LARGE\bf Supplementary Material }
\end{center}

\subsection{Proof of Theorem~\ref{thm:correctness}}
\label{subsec:proof-of-theorem-correctness}

To prove the theorem, we use the following concentration inequality. The proof is standard and omitted.
\begin{lemma}
\label{lemma:conc-ineq}
Let $X_1,X_2,\ldots,X_n$ be $n$ {\em i.i.d.} random variables distributed according to a a Gaussian  distribution with mean $\mu$ and variance $\sigma^2$. Then, for any $\epsilon>0$, we have
\begin{equation}\mathbb{P}\Bigg(\left|\frac{\sum_{i=1}^{n}X_i}{n}-\mu \right|> \epsilon\Bigg)\leq 2 \exp \left \lbrace -\frac{ n\epsilon^2}{2\sigma^2} \right \rbrace. \label{eqn:conc}
\end{equation}
\end{lemma}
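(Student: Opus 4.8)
The plan is to reduce the two-sided deviation bound for the sample mean to a one-dimensional Gaussian tail bound and then dispatch the latter by the Chernoff (moment generating function) method.

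First I would observe that, since $X_1,\ldots,X_n$ are independent Gaussians, their sample mean $\bar{X}_n := \frac{1}{n}\sum_{i=1}^{n} X_i$ is again Gaussian: it has mean $\mu$, and its variance is $\frac{1}{n^2}\sum_{i=1}^{n}\mathrm{Var}(X_i) = \sigma^2/n$ by independence. Centering, the random variable $Y := \bar{X}_n - \mu$ is therefore distributed as $\mathcal{N}(0,\sigma^2/n)$, and the left-hand side of \eqref{eqn:conc} equals $\mathbb{P}(|Y| > \epsilon)$.

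Next, by a union bound together with the symmetry of the centered Gaussian, I would write
\[
\mathbb{P}(|Y| > \epsilon) \le \mathbb{P}(Y > \epsilon) + \mathbb{P}(-Y > \epsilon) = 2\,\mathbb{P}(Y > \epsilon),
\]
reducing the problem to a single one-sided tail. I would then bound this tail by the Chernoff method: for any $\lambda > 0$, Markov's inequality applied to $e^{\lambda Y}$ gives
\[
\mathbb{P}(Y > \epsilon) = \mathbb{P}\big(e^{\lambda Y} > e^{\lambda \epsilon}\big) \le e^{-\lambda \epsilon}\, \mathbb{E}\big[e^{\lambda Y}\big] = \exp\Big(-\lambda \epsilon + \tfrac{\lambda^2 \sigma^2}{2n}\Big),
\]
where the final equality uses the closed form $\mathbb{E}[e^{\lambda Y}] = \exp(\lambda^2\sigma^2/(2n))$ of the Gaussian moment generating function. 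Minimizing the exponent over $\lambda > 0$ — the optimal choice is $\lambda = n\epsilon/\sigma^2$ — yields $\mathbb{P}(Y > \epsilon) \le \exp(-n\epsilon^2/(2\sigma^2))$, and combining with the factor of $2$ from the symmetrization step gives exactly the claimed inequality.

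There is essentially no substantive obstacle here: the statement is a textbook Gaussian concentration bound, and the only genuine computation is the Chernoff optimization (equivalently, completing the square in the exponent). If one prefers a shorter route, the symmetrization and Chernoff steps can be collapsed into a direct appeal to the standard sub-Gaussian tail inequality for $\mathcal{N}(0,\sigma^2/n)$, since the real content lies entirely in the first step — recognizing $\bar{X}_n$ as a Gaussian with variance $\sigma^2/n$ already does all the work, after which the bound is immediate.
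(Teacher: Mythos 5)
Your proof is correct: recognizing that $\bar{X}_n \sim \mathcal{N}(\mu,\sigma^2/n)$, symmetrizing, and applying the Chernoff bound with the optimal $\lambda = n\epsilon/\sigma^2$ (the exponent indeed minimizes to $-n\epsilon^2/(2\sigma^2)$) gives exactly the claimed inequality. The paper omits this proof as ``standard,'' and your argument is precisely that standard textbook argument, so the two are in full agreement.
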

\begin{remark} \label{rmk:subG} Note that the concentration inequality in \eqref{eqn:conc} in  Lemma  \ref{lemma:conc-ineq}  also holds for sub-Gaussian random variables with variance proxy $\sigma^2$. Hence the rest of our algorithmic results carry over  {\em mutatis mutandis} to  sub-Gaussian observations.
\end{remark}

The following result asserts that under the {\sc FedElim} algorithm, with high probability, the empirical means of the arms lie within an interval around their true means, an interval whose length is specified by the local and global confidence parameters $\alpha_{\mathrm{l}}(n)$ and $\alpha_{\mathrm{g}}(n)$, where recall that $\alpha_{\mathrm{l}}(n):=\sqrt{\frac{2\ln{(8KMn^2/\delta)}}{n}}$ and $\alpha_{\mathrm{g}}(n):=\sqrt{\frac{2\ln{(8Kn^2/\delta)}}{Mn}}$ for all $n\in \mathbb{N}$.

\begin{lemma}\label{lma:confidence_bounds}
Under $\pi=\textsc{FedElim}$, let $\hat{\mu}_{k,m}(n)$ be the empirical mean of arm $k$ of client $m$ at time step $n$, and let $\hat{\mu}_{k}(n)=\sum_{m=1}^{M} \hat{\mu}_{k,m}(n)/M$. Let $\alpha_{\mathrm{l}}(n)$ and $\alpha_{\mathrm{g}}(n)$ be as defined above. Let $\mathcal{E}$ be the event in \eqref{eq:event-E}.
Then, $P^\pi_{\boldsymbol{\mu}}(\mathcal{E})\geq 1-\delta$. 
\end{lemma}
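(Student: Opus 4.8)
The plan is to bound the probability of the complementary event $\mathcal{E}^c$ by a union bound, reducing everything to the single-coordinate concentration inequality of Lemma~\ref{lemma:conc-ineq}, and then to sum the resulting series over $n$.

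First I would decompose $\mathcal{E}^c$ into its local and global pieces:
\[
\mathcal{E}^c = \Big(\bigcup_{n,k,m}\{|\hat{\mu}_{k,m}(n)-\mu_{k,m}|>\alpha_{\mathrm{l}}(n)\}\Big)\cup\Big(\bigcup_{n,k}\{|\hat{\mu}_{k}(n)-\mu_{k}|>\alpha_{\mathrm{g}}(n)\}\Big),
\]
so that $P^\pi_{\boldsymbol{\mu}}(\mathcal{E}^c)$ is at most the sum of the probabilities of all these bad events; note that the global condition does not depend on $m$, so after de-duplication it is indexed only by $(n,k)$. Because the selection rule pulls every arm in the active set $\mathcal{S}_m$ exactly once per time step, whenever arm $k$ of client $m$ is active at time step $n$ its empirical mean $\hat{\mu}_{k,m}(n)$ is the average of exactly $n$ i.i.d.\ samples; this justifies pairing the index $n$ with a sample size of $n$ in the concentration bound.

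For the local terms I would apply Lemma~\ref{lemma:conc-ineq} with $\sigma^2=1$, sample size $n$, and $\epsilon=\alpha_{\mathrm{l}}(n)$. The confidence parameter is engineered so that $n\,\alpha_{\mathrm{l}}(n)^2/2=\ln(8KMn^2/\delta)$, hence each local term is at most $2\exp(-\ln(8KMn^2/\delta))=\delta/(4KMn^2)$. Summing over $k\in[K]$, $m\in[M]$ and $n\in\mathbb{N}$ and using $\sum_{n\ge1}n^{-2}=\pi^2/6\le 2$ gives a contribution of at most $(\delta/4)\sum_n n^{-2}\le \delta/2$.

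The one point requiring a little care is the global terms, since $\hat{\mu}_k(n)=\tfrac{1}{M}\sum_m\hat{\mu}_{k,m}(n)$ aggregates observations across clients with \emph{different} means $\mu_{k,m}$, so Lemma~\ref{lemma:conc-ineq} does not literally apply to the raw samples. Here I would observe that $\hat{\mu}_k(n)=\tfrac{1}{Mn}\sum_{m=1}^{M}\sum_{i=1}^{n} X_{k,m}(i)$, so the recentered variables $\tilde X_{k,m}(i):=X_{k,m}(i)-\mu_{k,m}+\mu_k$ are i.i.d.\ Gaussian with mean $\mu_k$ and variance $1$ and average to exactly $\hat{\mu}_k(n)$; equivalently $\hat{\mu}_k(n)-\mu_k$ is Gaussian with mean $0$ and variance $1/(Mn)$. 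Thus Lemma~\ref{lemma:conc-ineq} applies with effective sample size $Mn$, and since $\alpha_{\mathrm{g}}(n)$ is chosen so that $Mn\,\alpha_{\mathrm{g}}(n)^2/2=\ln(8Kn^2/\delta)$, each global term is at most $\delta/(4Kn^2)$; summing over $k\in[K]$ and $n\in\mathbb{N}$ yields another contribution of at most $\delta/2$. Adding the two pieces gives $P^\pi_{\boldsymbol{\mu}}(\mathcal{E}^c)\le\delta$, i.e.\ $P^\pi_{\boldsymbol{\mu}}(\mathcal{E})\ge1-\delta$, as claimed. The main obstacle is not any single estimate but the bookkeeping that ties together (i) the per-arm sample size being $n$ (resp.\ $Mn$) at time step $n$, (ii) the recentering that restores the i.i.d.\ structure for the global averages despite heterogeneous client means, and (iii) the precise constants buried in $\alpha_{\mathrm{l}}$ and $\alpha_{\mathrm{g}}$ that make the $1/n^2$ series summable to at most $\delta$.
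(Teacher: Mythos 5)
Your proof is correct and follows essentially the same route as the paper's: a union bound over the local events indexed by $(n,k,m)$ and the global events indexed by $(n,k)$, followed by Lemma~\ref{lemma:conc-ineq} with sample size $n$ (resp.\ $Mn$) and the choice of $\alpha_{\mathrm{l}}(n)$ (resp.\ $\alpha_{\mathrm{g}}(n)$) making each term $\delta/(4KMn^2)$ (resp.\ $\delta/(4Kn^2)$), summable to at most $\delta$. Your recentering argument for the heterogeneous-mean global averages is a detail the paper's one-line justification glosses over, and it is a valid (and welcome) way of making precise why the i.i.d.\ concentration lemma applies there.
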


\begin{proof}
By the union bound, we have
\begin{align}
    P^\pi_{\boldsymbol{\mu}}(\mathcal{E}^c)&\leq \sum_{n=1}^{\infty}\ \sum_{m\in [M]}\ \sum_{k\in [K]}P^\pi_{\boldsymbol{\mu}}(|\hat{\mu}_{k,m}(n)-{\mu}_{k,m}(n)| > \alpha_{\mathrm{l}}(n))+\sum_{n=1}^{\infty}\ \sum_{k\in [K]}P^\pi_{\boldsymbol{\mu}}(|\hat{\mu}_{k}(n)-{\mu}_{k}(n)| > \alpha_{\mathrm{g}}(n))\\
    &\leq \sum_{n=1}^{\infty}\ \sum_{m\in [M]}\sum_{k\in [K]} \frac{\delta}{4KMn^2}+\sum_{n=1}^{\infty} \ \sum_{k\in [K]} \frac{\delta}{4Kn^2}\nonumber\\
    & \leq \delta,
    \label{eq:proof-1}
\end{align}
where the penultimate line above follows from the application of Lemma \ref{lemma:conc-ineq} to the {\em i.i.d} observations generated from each local arm of each client.
\end{proof}

With the above ingredients in place, we are now ready to prove Theorem \ref{thm:correctness}.

\begin{proof}[Proof of Theorem \ref{thm:correctness}]
From Lemma \ref{lma:confidence_bounds}, event $\mathcal{E}$ in \eqref{eq:event-E} occurs with probability at least $1-\delta$. Therefore, it suffices to prove that conditioned on the event $\mathcal{E}$, the {\sc FedElim} algorithm always give the correct output. We prove this by contradiction. Suppose that under the event $\mathcal{E}$, the algorithm outputs $\hat{k}_m^*\neq k_m^*$ for some client $m$, i.e., the algorithm's estimate of client $m$'s local best arm differs from the actual local best arm of client $m$. This implies that there exists a time step $n$ such that
\begin{align}\label{eq:elimination1}
    \hat{\mu}_{\hat{k}_m^*,m}(n)-\hat{\mu}_{k_m^*,m}(n)\geq 2\alpha_{\mathrm{l}}(n).
\end{align} 
However, on the event $\mathcal{E}$, 
\begin{align}
    \hat{\mu}_{\hat{k}_m^*,m}(n)-\hat{\mu}_{k_m^*,m}(n) &\leq \mu_{\hat{k}_m^*,m}(n)+\alpha_{\mathrm{l}}(n)-(\mu_{k_m^*,m}(n)-\alpha_{\mathrm{l}}(n))\\
    &\leq 2\alpha_{\mathrm{l}}(n)-(\mu_{k_m^*,m}(n)-\mu_{\hat{k}_m^*,m}(n))\\
    &<2\alpha_{\mathrm{l}}(n),
\end{align} 
which is contradicts~\eqref{eq:elimination1}. Therefore, it must be the case that $\hat{k}_m^* = k_m^*$ under the event $\mathcal{E}$. Similar arguments may be used to show that $\hat{k}^* = k^*$ under the event $\mathcal{E}$. This completes the proof of the theorem.
\end{proof}

\subsection{Proof of Theorem \ref{thm:FLSEA_upperbound}}
\label{subsec:proof-of-high-prob-upper-bound-c-is-0}

This section is organised as follows. First,  conditioned on the event $\mathcal{E}$ defined in \eqref{eq:event-E}, we obtain in Lemma \ref{lma:local_stoppingtime_FLSEA} an upper bound on the number of time steps of $\textsc{FedElim}0$ required to identify a local arm of client $m$ as being the local best arm of client $m$ or identify that it is not then local best arm. Then,  conditioned on~$\mathcal{E}$, we obtain in Lemma \ref{lma:global_stoppingtime_FLSEA} an upper bound on the number of time steps of $\textsc{FedElim}0$ required to identify an arm as being the global best arm or otherwise. We complete the proof using  Lemmas~\ref{lma:local_stoppingtime_FLSEA} and~\ref{lma:global_stoppingtime_FLSEA}.

Note that even though Theorem \ref{thm:correctness} says that {\sc FedElim} algorithm is $\delta$-PAC, by following the same proof steps, we can prove that the $\textsc{FedElim}0$ algorithm is also $\delta$-PAC and that the event $\mathcal{E}$ holds with probability at least $1-\delta$ under $\textsc{FedElim}0$. A careful reader may observe that the algorithms $\textsc{FedElim}0$ and {\sc FedElim} differ only in their   communication rounds, but neither in the stopping rule nor in the arm selection/elimination rule.

\begin{lemma}\label{lma:local_stoppingtime_FLSEA}
 Let $T^{(1)}_{k,m}$ denote the time step of $\textsc{FedElim}0$ in which arm $k$ of client $m$ is identified as the local best arm of client $m$ or otherwise. Let $\mathcal{E}$ be the event defined in \eqref{eq:event-E}. Conditioned on the event $\mathcal{E}$, $T_{k,m}^{(1)} \leq T_{k,m}$, where $T_{k,m}$ is as defined in \eqref{eq:T_km}.
 \end{lemma}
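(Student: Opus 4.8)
The plan is to reduce the random stopping time $T^{(1)}_{k,m}$ to a deterministic quantity $n_{k,m}$, namely the first time step at which the local confidence parameter $\alpha_{\mathrm{l}}(n)$ is small enough to separate arm $k$ from the local best arm of client $m$, and then to bound $n_{k,m}$ by the explicit expression $T_{k,m}$ in \eqref{eq:T_km}. Throughout I work on the event $\mathcal{E}$, so that $|\hat{\mu}_{j,m}(n)-\mu_{j,m}|\le \alpha_{\mathrm{l}}(n)$ for every arm $j$, client $m$, and time $n$; in particular, by the argument in the proof of Theorem~\ref{thm:correctness}, the true local best arm $k^*_m$ is never eliminated and hence remains in $\mathcal{S}_{\mathrm{l},m}$ at every time step.

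First I would treat a suboptimal arm $k\ne k^*_m$. Since $k^*_m\in\mathcal{S}_{\mathrm{l},m}$, we have $\hat{\mu}_{*,m}(n)\ge\hat{\mu}_{k^*_m,m}(n)$, and the concentration bounds defining $\mathcal{E}$ give
$$\hat{\mu}_{*,m}(n)-\hat{\mu}_{k,m}(n)\ \ge\ \big(\mu_{k^*_m,m}-\alpha_{\mathrm{l}}(n)\big)-\big(\mu_{k,m}+\alpha_{\mathrm{l}}(n)\big)\ =\ \Delta_{k,m}-2\alpha_{\mathrm{l}}(n).$$
Hence as soon as $\alpha_{\mathrm{l}}(n)\le \Delta_{k,m}/4$, the right-hand side is at least $2\alpha_{\mathrm{l}}(n)$, the elimination test $\hat{\mu}_{*,m}(n)-\hat{\mu}_{k,m}(n)\ge 2\alpha_{\mathrm{l}}(n)$ fires, and arm $k$ is removed from $\mathcal{S}_{\mathrm{l},m}$. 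Defining $n_{k,m}$ to be the smallest integer with $\alpha_{\mathrm{l}}(n)\le \Delta_{k,m}/4$, this shows $T^{(1)}_{k,m}\le n_{k,m}$. For the local best arm itself ($k=k^*_m$), it is declared the instant every other local arm has been eliminated; applying the previous display to each competitor $j\ne k^*_m$, the last elimination occurs no later than the first time $\alpha_{\mathrm{l}}(n)\le \Delta_{k^*_m,m}/4$, where $\Delta_{k^*_m,m}=\min_{j\ne k^*_m}\Delta_{j,m}$. Thus in both cases it remains to prove $n_{k,m}\le T_{k,m}$ under the gap convention of the lemma.

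The remaining, and main, step is the deterministic inequality $n_{k,m}\le T_{k,m}$. Unwinding $\alpha_{\mathrm{l}}(n)=\sqrt{2\ln(8KMn^2/\delta)/n}$, the condition $\alpha_{\mathrm{l}}(n)\le\Delta_{k,m}/4$ is equivalent to
$$n\ \ge\ \frac{32}{\Delta_{k,m}^2}\,\ln\!\Big(\frac{8KMn^2}{\delta}\Big),$$
a transcendental inequality in $n$ because of the $\ln(n^2)=2\ln n$ term on the right. I would handle this by showing that $h(n)=n\,\Delta_{k,m}^2/32-2\ln n$ is increasing for $n>64/\Delta_{k,m}^2$, then verifying that $T_{k,m}$ lies in this increasing regime and satisfies $h(T_{k,m})\ge\ln(8KM/\delta)$; linearising the self-referential term $\ln T_{k,m}$ via a bound such as $\ln x\le x/e$ (or substituting the explicit value of $T_{k,m}$ and expanding $\ln(ab)=\ln a+\ln b$) reduces everything to elementary estimates. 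The factor $\sqrt{8KM/\delta}$ appearing inside the logarithm of $T_{k,m}$ in \eqref{eq:T_km} is precisely the trace of the term $2\ln n$ that must be absorbed, and the constant $102$ supplies the slack needed for the linearisation to close; carrying these constants through is where essentially all of the effort lies. Combining $T^{(1)}_{k,m}\le n_{k,m}\le T_{k,m}$ then completes the proof.
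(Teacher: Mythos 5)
Your proposal is correct, and its stochastic half is exactly the paper's argument: on $\mathcal{E}$ the true local best arm $k^*_m$ is never eliminated, so for a suboptimal arm $k$ the elimination statistic is at least $\Delta_{k,m}-2\alpha_{\mathrm{l}}(n)$, the test fires once $\alpha_{\mathrm{l}}(n)\le \Delta_{k,m}/4$, and the best arm's identification time is the maximum over its competitors, absorbed by the convention $\Delta_{k^*_m,m}=\min_{k\ne k^*_m}\Delta_{k,m}$. Where you genuinely diverge is the deterministic step $n_{k,m}\le T_{k,m}$. The paper solves $\alpha_{\mathrm{l}}(x)=\Delta_{k,m}/4$ \emph{exactly}: with $a=\Delta_{k,m}^2/64$ and $b=\ln(8KM/\delta)/2$ the largest root is $-\frac{1}{a}W_{-1}(-ae^{-b})$, and the cited bound $W_{-1}(y)>\frac{e}{e-1}\ln(-y)$ yields $n_{k,m}\le \frac{e}{e-1}\cdot\frac{b-\ln a}{a}+1$ in one line; this is precisely where the constant $102=\lceil 64e/(e-1)\rceil$ comes from. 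Your route --- monotonicity of $h(n)=n\Delta_{k,m}^2/32-2\ln n$ beyond $64/\Delta_{k,m}^2$ plus a verification at the candidate value --- is more elementary and self-contained (no Lambert $W$, no external reference), and it does close with the paper's constant: writing $v=102\ln\bigl(64\sqrt{8KM/\delta}/\Delta_{k,m}^2\bigr)$, the verification reduces to $v\cdot\frac{1.1875}{102}\ge 2\ln(v/64)$, which holds for all $v>0$ with minimum slack (about $0.025$) near $v\approx 172$. Two bookkeeping points you should make explicit when writing it up: first, perform the verification at $T_{k,m}-1$ rather than at $T_{k,m}$, so that the integer rounding $T^{(1)}_{k,m}\le\lceil x^*\rceil\le x^*+1$ lands at $T_{k,m}$ and not at $\lceil T_{k,m}\rceil$ (this is exactly what the paper's ``$+1$'' is for); second, your definition of $n_{k,m}$ as the smallest integer with $\alpha_{\mathrm{l}}(n)\le\Delta_{k,m}/4$ tacitly uses that $\alpha_{\mathrm{l}}$ is decreasing on $\mathbb{N}$, which holds here because $8KM/\delta>e^2$ (the paper guards against this with its ``$\forall\, n'\ge n$'' definition and by taking the larger root). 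In short, the paper's Lambert-$W$ step explains the constant cleanly, while yours trades the special function for elementary calculus at the cost of thin-margin constant-chasing; both are valid.
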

 
 \begin{proof}
Consider a local arm  $k$ of client $m$ that is not the best arm (non-best arm) of client $m$, i.e., $k\neq k_m^*$. Then, because the $\textsc{FedElim}0$ algorithm identifies $k$ as a local non-best arm of client $m$ in time step $T^{(1)}_{k,m}$, it must be the case that on the event~$\mathcal{E}$, $$\hat{\mu}_{*,m}\big(T^{(1)}_{k,m}\big)-\hat{\mu}_{k,m}\big(T^{(1)}_{k,m}\big)\geq 2\alpha_{\mathrm{l}}\big(T^{(1)}_{k,m}\big).$$ Notice that $\alpha_{\mathrm{l}}(n) = \sqrt{\frac{2\ln{(8KMn^2/\delta)}}{n}} \to 0$ as $n\to\infty$. Let $n^*_{k,m} \coloneqq \inf\big\lbrace n:  \alpha_{\mathrm{l}}(n^\prime) \leq \frac{\Delta_{k,m}}{4 }\ \forall\, n^\prime\geq n \big\rbrace$. Then, it must be the case that $\hat{\mu}_{k_m^*,m}(n)-\hat{\mu}_{k,m}(n)\geq 2\alpha_{\mathrm{l}}(n)$ for all $n\geq n^*_{k,m}$, from which it follows that $T^{(1)}_{k,m}\leq n^*_{k,m}$  on the event~$\mathcal{E}$. 

We now obtain an upper bound on $n^*_{k,m}$.
Let $$n_{k,m} \coloneqq \left\lceil \max \left\lbrace x\in (1, \infty): \alpha_{\mathrm{l}}(x) = \frac{\Delta_{k,m}}{4} \right\rbrace \right\rceil. $$ The maximum in the above equation picks the larger of the two solutions to the equation $\alpha_{\mathrm{l}}(x)=\Delta_{k, m}/4$ in the range $x\in (1, \infty)$, and the ceil function   $\lceil x \rceil$ returns the smallest integer greater than or equal to $x$. Clearly, $n^*_{k,m} \leq n_{k,m}$. Letting $$a=\frac{\Delta_{k,m}^2}{64}\quad\mbox{and}\quad b=\frac{\ln\left(\frac{8KM}{\delta}\right)}{2},$$ the exact expression for $n_{k,m}$ is given by $n_{k,m}=\left\lceil- \frac{ 1}{a}W_{-1}(-ae^{-b})\right \rceil$, where 
 $W_{-1}(y)$, $y<0$, is the smallest value of $x$ such that $x e^x=y$ holds ($W_{-1}$ is known as the {\em Lambert $W $ function}). 
From \citet[Theorem 3.1]{alzahrani2018sharp}, we have $W_{-1}(y)>\frac{e}{e-1}\ln(-y)$, and therefore
\begin{align}
	T^{(1)}_{k,m} &\leq n_{k,m} \nonumber\\
	&\leq \left \lceil \frac{e}{e-1}\cdot \frac{b-\ln(a)}{a} \right \rceil \nonumber\\
	&\leq \frac{e}{e-1}\cdot \frac{b-\ln(a)}{a} + 1 \nonumber\\
	&=\frac{64 e}{e-1} \cdot \frac{1}{\Delta_{k,m}^2} \cdot \ln\left(\frac{64\sqrt{\frac{8KM}{\delta}}}{\Delta_{k,m}^2}\right) + 1\nonumber\\
	&=102 \cdot \frac{1}{\Delta_{k,m}^2} \cdot \ln\left(\frac{64\sqrt{\frac{8KM}{\delta}}}{\Delta_{k,m}^2}\right) + 1.
\end{align}
Finally, we note that the local best arm of client $m$ is identified when each of the other arms of client $m$ is identified as a non-best arm. Therefore, is identified as the best arm when all the other sub-optimal arms are inactive. Therefore, 
\begin{align}
	T^{(1)}_{k^*_m,m} &= \max_{k\neq k_m^*} T^{(1)}_{k,m} \leq
	102 \cdot \frac{1}{\Delta_{k_m^*,m}^2} \cdot \ln\left(\frac{64\sqrt{\frac{8KM}{\delta}}}{\Delta_{k_m^*,m}^2}\right) + 1.
\end{align}
This completes the proof of Lemma \ref{lma:local_stoppingtime_FLSEA}.
 \end{proof}
 
 \begin{lemma}\label{lma:global_stoppingtime_FLSEA}
Let $T^{(2)}_{k}$ denote the time step of $\textsc{FedElim}0$ in which arm $k$ is identified as the global best arm at the server or otherwise. Let $\mathcal{E}$ be the event defined in \eqref{eq:event-E}. Conditioned on $\mathcal{E}$, $T_{k}^{(2)} \leq T_{k}$, where $T_{k}$ is as defined in \eqref{eq:T_k}
 \end{lemma}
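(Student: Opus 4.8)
The plan is to transcribe the argument of Lemma~\ref{lma:local_stoppingtime_FLSEA} almost verbatim, systematically replacing the local confidence parameter $\alpha_{\mathrm{l}}$ by the global one $\alpha_{\mathrm{g}}(n)=\sqrt{\frac{2\ln(8Kn^2/\delta)}{Mn}}$, the local gap $\Delta_{k,m}$ by the global gap $\Delta_k$, and the per-client empirical mean $\hat\mu_{k,m}(n)$ by the aggregated mean $\hat\mu_k(n)=\sum_{m=1}^M\hat\mu_{k,m}(n)/M$. First I would fix a global non-best arm $k\neq k^*$ and observe that, because $\textsc{FedElim}0$ eliminates $k$ from $\mathcal{S}_{\mathrm{g}}$ at time $T^{(2)}_k$, it must be that $\hat\mu_*(T^{(2)}_k)-\hat\mu_k(T^{(2)}_k)\ge 2\alpha_{\mathrm{g}}(T^{(2)}_k)$.

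Next I would convert the event $\mathcal{E}$ into a deterministic sufficient condition for elimination. Since the correctness argument (Theorem~\ref{thm:correctness}) guarantees that $k^*$ is never eliminated on $\mathcal{E}$, we have $\hat\mu_*(n)\ge\hat\mu_{k^*}(n)$ for all $n$; combining this with the two confidence bounds in $\mathcal{E}$, namely $\hat\mu_{k^*}(n)\ge\mu_{k^*}-\alpha_{\mathrm{g}}(n)$ and $\hat\mu_k(n)\le\mu_k+\alpha_{\mathrm{g}}(n)$, yields $\hat\mu_*(n)-\hat\mu_k(n)\ge\Delta_k-2\alpha_{\mathrm{g}}(n)$. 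Hence whenever $\alpha_{\mathrm{g}}(n)\le\Delta_k/4$ the elimination threshold $2\alpha_{\mathrm{g}}(n)$ is satisfied, so $k$ is removed. As $\alpha_{\mathrm{g}}(n)\to0$, setting $n^*_k\coloneqq\inf\{n:\alpha_{\mathrm{g}}(n')\le\Delta_k/4\ \forall\,n'\ge n\}$ gives $T^{(2)}_k\le n^*_k$ on $\mathcal{E}$.

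It then remains to bound $n^*_k$, which I would do exactly as in the local case by solving $\alpha_{\mathrm{g}}(x)=\Delta_k/4$. Squaring and rearranging reduces this to $b+\ln x=ax$ with $a=\frac{M\Delta_k^2}{64}$ and $b=\frac{\ln(8K/\delta)}{2}$; this is structurally identical to the local computation, but with $M\Delta_k^2$ in place of $\Delta_{k,m}^2$ and $8K/\delta$ in place of $8KM/\delta$, which is precisely where the extra factor $M$ in the denominator of $\alpha_{\mathrm{g}}$ and the $K$ rather than $KM$ inside its logarithm enter. The larger root is $x=-\frac{1}{a}W_{-1}(-ae^{-b})$, and its ceiling defines $n_k\ge n^*_k$. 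Applying the bound $W_{-1}(y)>\frac{e}{e-1}\ln(-y)$ from \citet[Theorem~3.1]{alzahrani2018sharp} gives $n_k\le\frac{e}{e-1}\cdot\frac{b-\ln a}{a}+1$, and substituting $a,b$ (using $b-\ln a=\ln(\frac{64\sqrt{8K/\delta}}{M\Delta_k^2})$) yields $\frac{64e}{e-1}\cdot\frac{1}{M\Delta_k^2}\ln\big(\frac{64\sqrt{8K/\delta}}{M\Delta_k^2}\big)+1\le 102\cdot\frac{1}{M\Delta_k^2}\ln\big(\frac{64\sqrt{8K/\delta}}{M\Delta_k^2}\big)+1=T_k$, as claimed. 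Finally, for the global best arm itself I would note that $k^*$ is declared only once every competitor is eliminated, so $T^{(2)}_{k^*}=\max_{k\neq k^*}T^{(2)}_k\le T_{k^*}$, with $\Delta_{k^*}=\min_{k\neq k^*}\Delta_k$.

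As for difficulty, there is essentially no new obstacle: the proof is a faithful transcription of Lemma~\ref{lma:local_stoppingtime_FLSEA}. The only points requiring a little care are (i) the non-monotonicity of $\alpha_{\mathrm{g}}$ for small $n$, which forces the ``$\forall\,n'\ge n$'' phrasing in the definition of $n^*_k$ and the selection of the \emph{larger} root of $\alpha_{\mathrm{g}}(x)=\Delta_k/4$ (equivalently, the $W_{-1}$ branch of the Lambert function); and (ii) the numerical verification that $\frac{64e}{e-1}\le 102$. Both are handled identically to the local case.
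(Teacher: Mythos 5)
Your proposal is correct and coincides with the paper's own treatment: the paper simply states that Lemma~\ref{lma:global_stoppingtime_FLSEA} ``follows along the lines of the proof of Lemma~\ref{lma:local_stoppingtime_FLSEA}'' and omits the details, which is precisely the substitution $\alpha_{\mathrm{l}}\to\alpha_{\mathrm{g}}$, $\Delta_{k,m}\to\Delta_k$, $\hat{\mu}_{k,m}\to\hat{\mu}_k$ that you carry out, with the correct constants $a=\frac{M\Delta_k^2}{64}$, $b=\frac{\ln(8K/\delta)}{2}$ and the same Lambert-$W_{-1}$ bound. Your transcription faithfully reproduces the intended argument, including the handling of the global best arm $k^*$ via $T^{(2)}_{k^*}=\max_{k\neq k^*}T^{(2)}_k$.
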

 
 \begin{proof}
The proof follows along the lines of the proof of Lemma \ref{lma:local_stoppingtime_FLSEA} and is thus omitted.
 \end{proof}
With the above ingredients in place, we now prove Theorem \ref{thm:FLSEA_upperbound}.
 
\begin{proof}[Proof of Theorem \ref{thm:FLSEA_upperbound}]
Let $T^*_{k,m}$ denote the total number of selections of arm $k$ of client $m$ under $\textsc{FedElim}0$ up to stoppage. From Lemma \ref{lma:local_stoppingtime_FLSEA}, we know that under the event $\mathcal{E}$, arm $k$ can be identified as the local best arm of client $m$ or otherwise after $T^{1}_{k,m}$ selections of arm $k$. Also, from Lemma \ref{lma:global_stoppingtime_FLSEA}, we know that under the event $\mathcal{E}$, arm $k$ can be identified as the global best arm or otherwise after $T^{(2)}_{k}$ selections of arm $k$. Thus, it follows that $T^*_{k,m}=\max\{T^{(1)}_{k,m},T^{(2)}_{k}\}$ on the event $\mathcal{E}$ for all $k\in [K]$ and $m \in [M]$.
 \begin{align}
     T_{\textsc{FedElim}0} &= \sum_{k=1}^{K}\ \sum_{m=1}^{M} T^*_{k,m}\nonumber\\
     &= \sum_{k=1}^{K}\ \sum_{m=1}^{M} \max\{T^{(1)}_{k,m},T^{(2)}_{k}\} \nonumber\\
     &\leq \sum_{k=1}^{K}\ \sum_{m=1}^{M} \max\{T_{k,m},T_{k}\},
 \label{eq:proof-4}
 \end{align}
where the inequality above follows from Lemmas \ref{lma:local_stoppingtime_FLSEA} and \ref{lma:global_stoppingtime_FLSEA}. The desired result is thus proved.
 \end{proof}
 
 \subsection{Proof of Theorem \ref{thm:lb1_c0}}
 \label{subsec:proof-of-thm-lower-bound-1}
 \begin{proof}
 Let us arbitrarily fix $\delta>0$, a problem instance $\boldsymbol{\mu}=[\mu_{k,m}:k\in [K],m\in[M]]$, and a $\delta$-PAC algorithm $\pi$. Let  $N_{k,m}$ denote the (random) number of selections of arm $k$ of client $m$ under $\pi$ up to its termination time. Let us define  $\mathrm{Alt}(\boldsymbol{\mu})$, the set of all problem instances {\em alternative} to $\boldsymbol{\mu}$, as
\begin{equation}
	\mathrm{Alt}(\boldsymbol{\mu}) \coloneqq \{\boldsymbol{\mu}'\in \mathbb{R}^{K\times M}:\mathbf{S}(\boldsymbol{\mu})\neq \mathbf{S}(\boldsymbol{\mu}') \}.
	\label{eq:alternative-problem-instances}
\end{equation}
From \citet[Lemma 1]{kaufmann2016complexity} and using the fact that the Kullback--Leibler divergence between two unit-variance Gaussian distributions with means $\mu$ and $\mu'$ is $d(\mu,\mu')=(\mu-\mu')^2/2$, we have
\begin{align}
	\sum_{k=1}^{K}\ \sum_{m=1}^{M}\ \mathbb{E}_{\boldsymbol{\mu}}^\pi [N_{k,m}]\ (\mu_{k,m}-\mu_{k,m}')^2
	\geq 2 \ln \left( \frac{1}{2.4\delta} \right)
	\label{eq:transportation-lemma}
\end{align}
for all $\boldsymbol{\mu}^\prime \in \mathrm{Alt}(\boldsymbol{\mu})$. 

Fix $\varepsilon>0$ and $m\in [M]$ arbitrarily, and suppose that $k_m^*\in [K]$ is the local best arm of client $m$ under the problem instance $\boldsymbol{\mu}$. Fix an arbitrary $k \neq k_m^*$. Let  $\boldsymbol{\mu}^{(1)}=[\mu_{k^\prime,m^\prime}^{(1)}:k^\prime\in [K],m^\prime\in[M]]$ be a problem instance such that $\mu_{k^\prime,m^\prime}^{(1)}=\mu_{k^\prime,m^\prime}$ for all $(k^\prime,m^\prime)\neq (k,m)$ and $\mu_{k,m}^{(1)}=\mu_{k^*_{m},m}+\varepsilon$. Clearly, then, arm $k$ is the local best arm of client $m$ under the problem instance $\boldsymbol{\mu}^{(1)}$, and therefore $\boldsymbol{\mu}^{(1)}\in\mathrm{Alt}(\boldsymbol{\mu})$. Applying the inequality in \eqref{eq:transportation-lemma} to $\boldsymbol{\mu}^{(1)}$, we get
\begin{equation}
	\mathbb{E}_{\boldsymbol{\mu}}^\pi [N_{k,m}]\ (\mu_{k,m}-\mu_{k^*_{m},m}-\varepsilon)^2 \geq 2\ln \left(\frac{1}{2.4\delta}\right),
	\label{eq:proof-5}
\end{equation}
or equivalently,
\begin{equation}
	\mathbb{E}_{\boldsymbol{\mu}}^\pi [N_{k,m}]\geq \frac{2\ln\left(\frac{1}{2.4\delta}\right)}{(\Delta_{k,m}+\varepsilon)^2}.
	\label{eq:proof-6}
\end{equation}
Letting $\varepsilon\downarrow 0$, we get 
\begin{align}
    \mathbb{E}_{\boldsymbol{\mu}}^\pi [N_{k,m}]\geq \frac{2\ln\left(\frac{1}{2.4\delta}\right)}{\Delta_{k,m}^2},\quad k \neq k_m^*.
    \label{eq:localsubopt_lb}
\end{align}

To derive a bound similar to \eqref{eq:localsubopt_lb} for $k_m^*$, fix an arbitrary $\varepsilon>0$ and let $\boldsymbol{\mu}^{(2)}=[\mu_{k^\prime,m^\prime}^{(2)}:k^\prime\in [K],m^\prime\in[M]]$ be a problem instance such that $\mu_{k^\prime,m^\prime}^{(2)}=\mu_{k^\prime,m^\prime}$ for all  $(k^\prime,m^\prime)\neq (k^*_{m},m)$ and $\mu_{k^*_{m},m}^{(2)}=\max_{k^\prime \neq k^*_{m}}\mu_{k^\prime,m}-\varepsilon$. Clearly, then, the local best arm of client $m$ under the instance $\boldsymbol{\mu}^{(2)}$ is not arm $k^*_{m}$. Therefore,  $\boldsymbol{\mu}^{(2)}\in\mathrm{Alt}(\boldsymbol{\mu})$, and
\begin{equation}
	\mathbb{E}_{\boldsymbol{\mu}}^\pi [N_{k,m}]\ (\mu_{k_m^*,m}- \mu_{k^*_{m},m}^{(2)})^2 \geq \ln \left(\frac{1}{2.4\delta}\right),
	\label{eq:proof-7}
\end{equation}
or equivalently,
\begin{equation}
	\mathbb{E}_{\boldsymbol{\mu}}^\pi [N_{k_m^*,m}] \geq \frac{2\ln\left(\frac{1}{2.4\delta}\right)}{(\Delta_{k^*_{m},m}+\varepsilon)^2}.
	\label{eq:proof-8}
\end{equation}
As before, letting $\varepsilon \downarrow 0$, we get 
\begin{align}
    \mathbb{E}_{\boldsymbol{\mu}}^\pi [N_{k_m^*,m}]\geq\frac{2\ln\left(\frac{1}{2.4\delta}\right)}{\Delta_{k^*_{m},m}^2}.
    \label{eq:localopt_lb}
\end{align}

Fix $\varepsilon>0$ and $m\in [M]$ arbitrarily. Let $k^*$ be the global best arm under the problem instance $\boldsymbol{\mu}$. Fix $k \neq k^*$ arbitrarily. Let $\boldsymbol{\mu}^{(3)}=[\mu_{k^\prime,m^\prime}^{(3)}:k^\prime\in [K],m^\prime\in[M]]$ be a problem instance such that $\mu_{k^\prime,m^\prime}^{(3)}=\mu_{k^\prime,m^\prime}$ for all $(k^\prime,m^\prime)\neq (k,m)$ and $\mu_{k,m}^{(3)}=\sum_{m^\prime\in[M]}\mu_{k^*,m^\prime}-\sum_{m^\prime\in[M]\backslash\{m\}}\mu_{k,m^\prime}+\varepsilon$. Clearly, $k$ is the global best arm under the instance $\boldsymbol{\mu}^{(3)}$. Therefore,  $\boldsymbol{\mu}^{(3)}\in\mathrm{Alt}(\boldsymbol{\mu})$, and it follows from \eqref{eq:transportation-lemma} that
\begin{equation}
	\mathbb{E}_{\boldsymbol{\mu}}^\pi [N_{k,m}] \geq \frac{2\ln\left(\frac{1}{2.4\delta}\right)}{M^2(\Delta_{k}+\varepsilon/M)^2}.
	\label{eq:proof-10}
\end{equation}
Letting $\varepsilon\downarrow 0$, we get 
\begin{align}
    \mathbb{E}_{\boldsymbol{\mu}}^\pi [N_{k,m}] \geq \frac{2\ln\left(\frac{1}{2.4\delta}\right)}{M^2\Delta_{k}^2}, \quad k \neq k^*.
    \label{eq:globalsubopt_lb}
\end{align}

Finally, consider the global best arm $k^*$ and client $m$. By following a similar procedure as above to generate a problem instance $\boldsymbol{\mu}^{(4)}$ such that the mean of the global best arm under the instance $\boldsymbol{\mu}^{(4)}$ is equal to $\max_{k^\prime\neq k^*}\mu_{k^\prime}$ (i.e., the mean of the global second best arm under the instance $\boldsymbol{\mu}$), we get 
\begin{align}
    \mathbb{E}_{\boldsymbol{\mu}}^\pi [N_{k^*,m}] \geq \frac{2\ln\left(\frac{1}{2.4\delta}\right)}{M^2\Delta_{k^*}^2}.
    \label{eq:globalopt_lb}
\end{align}
Combining the lower bounds in~\eqref{eq:localsubopt_lb},  \eqref{eq:localopt_lb}, \eqref{eq:globalsubopt_lb},  and~\eqref{eq:globalopt_lb}, we obtain that for all $\delta$-PAC algorithms $\pi$,
\begin{equation}
	 \mathbb{E}_{\boldsymbol{\mu}}^\pi [T_{\delta}^{\pi}] \geq \sum_{k=1}^{K}\ \sum_{m=1}^{M}\mathbb{E}_{\boldsymbol{\mu}}^\pi [N_{k,m}] \geq \sum_{k=1}^{K}\ \sum_{m=1}^{M}\max \left \lbrace \frac{2\ln\left(\frac{1}{2.4\delta}\right)}{\Delta_{k,m}^2},\frac{2\ln\left(\frac{1}{2.4\delta}\right)}{M^2\Delta_{k}^2} \right \rbrace.
	 \label{eq:proof-11}
\end{equation}
This proves the desired result.
 \end{proof}
 
 \begin{remark} \label{rmk:sG_lb}
 {\color{black} The adept reader may ask whether the above proof technique carries over to sub-Gaussian distributions. If the distributions of the local arms of the clients belong to the class of all distributions that are supported on an unbounded set and are sub-Gaussian, the above proof applies since by \citet[Lemma~6.3]{Csi06}, the KL divergence between two distributions having the same variance can be upper bounded by the square of the difference of their means. However, for {\em bounded} sub-Gaussian distributions, say supported on $[0,1]$, the constructions of the alternative instances $\boldsymbol{\mu}^{(3)}$ and $\boldsymbol{\mu}^{(4)}$ in the above proof result in arm means that potentially do not lie in $[0,1]$. 
A careful construction of the alternative instances is necessary for {\em bounded} sub-Gaussian distributions.}
 \end{remark}
 
\subsection{Proof of Theorem \ref{thm:FLSEA_withcost}}
\label{subsec:proof-of-theorem-FLSEA-withcost}
\begin{proof}
Fix $k\in [K]$ and $m\in [M]$ arbitrarily. Let $T_{k,m}^{(1),C}$ denote the time step of {\sc FedElim} in which arm $k$ of client $m$ is identified as the local best arm of client $m$ or identified to be not the best arm. Because the calculation of the empirical means of the arms is independent of the value of $C$, it follows that $T_{k,m}^{(1),C}=T_{k,m}^{(1)}$, where $T_{k,m}^{(1)}$ is as defined in Lemma \ref{lma:local_stoppingtime_FLSEA}. Invoking the upper bound for $T_{k,m}^{(1)}$ from Lemma \ref{lma:local_stoppingtime_FLSEA}, we have
\begin{align}
 	T_{k,m}^{(1),C}=T^{(1)}_{k,m} &\leq 102 \cdot \frac{1}{\Delta_{k,m}^2}\cdot \ln\left(\frac{64\sqrt{\frac{8KM}{\delta}}}{\Delta_{k,m}^2} \right) + 1
	\label{eq:proof-2C}
 \end{align}
under the event $\mathcal{E}$. Let $T_{k}^{(2),C}$ denote the time step of $\textsc{FedElim}$ in which arm $k$  is identified at the server as the global best arm or otherwise. Because the communication between the clients and the sever takes place only in time steps $n$ of the form $n\in \{2^t:t\in \mathbb{N}_0\}$, it must be the case that $T_{k}^{(2),C}=2^{t^*}$ for some $t^*\in \mathbb{N}_0$. Furthermore, from Lemma \ref{lma:global_stoppingtime_FLSEA}, it must be the case that $2^{t^*-1}< T_{k}^{(2)}$  or 
$T_{k}^{(2),C}< 2\  T_{k}^{(2)}$.
 Thus,
 \begin{align}\label{eq:armpulls_FLSEA}
    T_{\textsc{FedElim}}^C= & \sum_{k=1}^{K}\ \sum_{m=1}^{M} \max\{T_{k,m}^{(1),C}, T_k^{(2),C}\}\nonumber\\
    &\leq  \sum_{k=1}^{K}\ \sum_{m=1}^{M} \max\{T_{k,m}^{(1)}, 2\, T_k^{(2)}\}\nonumber\\
    &\leq \sum_{k=1}^{K}\ \sum_{m=1}^{M} \max\{T_{k,m}, 2\, T_k\}\nonumber\\
    &\leq \sum_{k=1}^{K}\ \sum_{m=1}^{M} \max\{2\,T_{k,m}, 2\, T_k\}\nonumber\\
    &= 2\,T ,
 \end{align}
 where $T$ is defined in \eqref{eq:high-prob-upper-bound-c-is-zero}.
 
Notice that client $m$ communicates the empirical mean updates of its local arm $k$ to the server in the time steps $n=1,2,\ldots,2^{t^*}=T_{k}^{(2),C}$. Hence, the total communication cost incurred by client $m$ to communicate the empirical mean of its local arm $k$ is $C \, t^*= C \, \log_2{T_{k}^{(2),C}}\leq C \, \left\lceil\log_2{T_{k}}\right\rceil$. Therefore, on the event $\mathcal{E}$, the total communication cost aggregated across all the local arms of all the clients is 
 \begin{equation}\label{eq:communication_cost}
     C_{\textsc{FedElim}}^{\mathrm{comm}} \leq C \sum_{k=1}^{K}\ \sum_{m=1}^{M}\left\lceil \log_2{T_{k}}\right\rceil.
 \end{equation}
From \eqref{eq:armpulls_FLSEA} and \eqref{eq:communication_cost}, it follows that under the event $\mathcal{E}$,
 \begin{align}\label{eq:total_cost}
 	C_{\textsc{FedElim}}^{\mathrm{total}} &= T_{\textsc{FedElim}}^C+C_{\textsc{FedElim}}^{\mathrm{comm}} \nonumber\\
	&\leq \sum_{k=1}^{K}\sum_{m=1}^{M} \big( \max\{T_{k,m}, 2\, T_k\}+C\, \left\lceil \log_2 {T_{k}}\right\rceil\big) \nonumber\\
	&\leq 3\,T.
 \end{align}
This proves the desired result.
\end{proof}


\section{Exponentially Sparse Communication with Base $\lambda$}
Recall our exponentially sparse communication scheme wherein communication between the clients and the server takes place only in time steps $n=2^t$ for $t\in \mathbb{N}_0$. As alluded to in the paragraphs trailing the statement of Theorem \ref{thm:FLSEA_withcost}, it is no coincidence that the constant $2$ appears inside the maximum in \eqref{eqn:first_line} and also in the denominator in \eqref{eq:comm-cost-under-FedElim}. The reader may question the rationale behind choosing $2$ as the base in the exponentially sparse communication time steps $n=2^t$ for $t\in \mathbb{N}_0$. As such, $2$ is the smallest integer base that results in sparser communication than communication in every time step. Going a step further, the reader might ask if there are any obvious advantages of communicating in time steps $n=\lceil \lambda^t \rceil$ for some $\lambda \neq 2$. 
 It suffices to investigate this question for $\lambda \geq 1$, as exponentially sparse communication with $\lambda < 1$ is clearly inferior to communicating in every time step.

Following our proof techniques, it can be shown that the $\lambda$-analogues of \eqref{eqn:first_line}-\eqref{eq:total-cost-FedElim} for any $\lambda\geq 1$  are as follows.
%
%
%
%
\begin{align}
    T_{\textsc{FedElim}}^C &\leq \sum_{k=1}^{K}\ \sum_{m=1}^{M} \max\{T_{k,m},\ \lambda \,T_k\} \leq \lambda \,T, \label{eq:lambda-analogue-1}\\
    	C_{\textsc{FedElim}}^{\mathrm{comm}} & \leq C\cdot M \cdot \sum_{k=1}^{K} \left\lceil \frac{\ln T_k }{\ln \lambda } \right\rceil \leq C\cdot M \cdot \sum_{k=1}^{K} \frac{\ln T_k }{\ln \lambda } +C\cdot M \cdot K, \label{eq:lambda-analogue-2}\\
    	C_{\textsc{FedElim}}^{\mathrm{total}} &= T_{\textsc{FedElim}}^C + C_{\textsc{FedElim}}^{\mathrm{comm}} \leq \lambda \,T + C\cdot M \cdot \sum_{k=1}^{K} \frac{\ln T_k }{\ln \lambda } +C\cdot M \cdot K. \label{eq:lambda-analogue-3}
\end{align}
The second inequality in \eqref{eq:lambda-analogue-2} results from  the  bound $\lceil x \rceil < x+1$. Notice that the first term on the right-hand side of the inequality in \eqref{eq:lambda-analogue-3} is monotonically increasing in $\lambda$, whereas the second term on the right-hand side of the inequality is monotonically decreasing in $\lambda$, thereby hinting that there may be a ``sweet spot'' for $\lambda$ where the upper bound in \eqref{eq:lambda-analogue-3} is minimal. Indeed, optimising the upper bound in \eqref{eq:lambda-analogue-3} with respect to $\lambda$ to obtain the tightest possible upper bound for $C_{\textsc{FedElim}}^{\mathrm{total}}$, we obtain that the optimiser, say  $\lambda^*$, is the unique solution to $\lambda^*(\ln \lambda^*)^2=\frac{C\cdot M}{T} \cdot \sum_{k=1}^{K}\ln T_k$. While the closed-form expression for $\lambda^*$ may not be readily available, it may nevertheless be evaluated numerically (via, for e.g., line search). Notice that $\lambda^*$ is, in general, a function of $C$, $T$ and $T_k$ for all $k\in [K]$. From the expressions for $T$ and $T_k$ in \eqref{eq:high-prob-upper-bound-c-is-zero} and \eqref{eq:T_k} respectively, it is clear that these constants are functions of the underlying problem instance $\boldsymbol{\mu}$. It is therefore overwhelmingly clear from the preceding analysis that $\lambda^*$ is, in general, a function of $C$ and other problem instance-specific constants. The latter are not typically known beforehand in most practical settings, thereby making the computation of $\lambda^*$ infeasible. Therefore, we do away with the computation of the optimal $\lambda$ and simply use $\lambda=2$, which, as we have seen, works well in theory and in practice.

%
%

%

\section{On the Optimal Period of a Periodic Communication Scheme}
Figures~\ref{subfig:total-cost-comparison} and~\ref{subfig:total-cost-comparison-movielens} indicate that the total cost of a periodic communication algorithm (based on successive elimination) with period $H$ decreases, attains a minimum, and thereafter increases as  $H$ increases, thereby suggesting that there is ``sweet spot'' for $H$, say $H_{\textsf{opt}}$, in which the total cost is minimal. In particular, Figure~\ref{subfig:total-cost-comparison} suggests that periodic communication schemes with periods $H=100$ and $H=1000$ perform better than our exponentially sparse communication for $C=10$ and the problem instance $\boldsymbol{\mu}$ in \eqref{eqn:syn}. A natural question is: {\em Can we determine the optimal value of $H$ where the total cost is minimal? } Our answers below are only partial and a possible first attempt at understanding the rather interesting trend for the total cost of periodic communication scheme observed in Figure~\ref{subfig:total-cost-comparison} (when each arm generates Gaussian rewards). 

Following our proof techniques, it can be shown that when the uplink communication cost is $C$, the analogues of \eqref{eqn:first_line}--\eqref{eq:total-cost-FedElim} for a periodic communication scheme with period $H$ are as follows:
\begin{align}
    T_{\textsc{Periodic}}^{H,C} &\leq \sum_{k=1}^{K}\ \sum_{m=1}^{M} \max\left\{T_{k,m}, \left\lceil \frac{T_k}{H} \right\rceil H\right\} \leq T + H\cdot M \cdot K, \label{eq:periodic-1} \\
    	C_{\textsc{Periodic}}^{\mathrm{comm}} & \leq C\cdot M \cdot \sum_{k=1}^{K} \left\lceil \frac{T_k}{H} \right\rceil \leq C \cdot \frac{ T }{H } +C\cdot M \cdot K, \label{eq:periodic-2}\\
    	C_{\textsc{Periodic}}^{\mathrm{total}} &=T_{\textsc{Periodic}}^{H,C} + C_{\textsc{Periodic}}^{\mathrm{comm}} \leq T + H\cdot M \cdot K + C \cdot \frac{ T }{H } +C\cdot M \cdot K. \label{eq:periodic-3}
\end{align}
The inequalities in \eqref{eq:periodic-1} and \eqref{eq:periodic-2} result from using $\lceil x \rceil \leq x+1$. Notice that the second term on the right-hand side of the inequality in \eqref{eq:periodic-3} is monotonically increasing in $H$ whereas the third term is monotonically decreasing in $H$, thereby hinting that there may be a sweet spot for $H$ where the upper bound in~\eqref{eq:periodic-3} is minimal. Indeed, differentiating the upper bound in~\eqref{eq:periodic-3} with respect to $H$, and setting this derivative to zero, we get that the optimal $H$ is $H_{\textsf{opt}}=\sqrt{\frac{C\cdot T }{M \cdot K}}$, which is clearly a function of $C$ and $T$ (a problem instance-specific constant). Whereas operating at $H_{\textsf{opt}}$ surely leads to the tightest possible upper bound on $C_{\textsc{Periodic}}^{\mathrm{total}}$ among the class of all periodic communication schemes, as is clear from the above exposition, the determination of $H_{\textsf{opt}}$ requires the knowledge of $C$ and $T$, both of which are not available in most practical settings. 

One key takeaway from Figure~\ref{subfig:total-cost-comparison} is that our exponentially sparse communication scheme achieves a similar total cost as that of a periodic communication scheme operating close to $H_{\textsf{opt}}$, albeit without the knowledge of $C$ and $T$. In fact, from Figure~\ref{subfig:total-cost-comparison-movielens}, we observe that in the experiment on the large-scale, real-world MovieLens dataset, {\sc FedElim} attains the smallest total cost among all the candidate periods $H$ of the periodic communication scheme.

\section{A Super-Exponentially Sparse Communication Scheme: Better or Worse?}

In this section, we analyse a {\em super-exponentially sparse} communication scheme in which communication takes place in time steps $n=2^{2^t}$ for $t \in \mathbb{N}_0$. Specifically, we seek  answers to the following question: {\em How does the total cost of a super-exponentially sparse communication scheme compare with those of the periodic and the exponentially sparse communication schemes?}
Intuitively, the more frequently communication takes place, the larger is the communication cost, the faster is the identification of the global best arm because of the frequent exchange of information between the clients and the server, and therefore the smaller is the total number of arm selections required. 

Table~\ref{tab:comparative-study} presents  worst-case upper bounds on the total number of arm selections, communication cost, and the total cost between (a) a periodic communication scheme with period $H$ and based on successive elimination, (b) {\sc FedElim} (with exponentially sparse communication), and (c) a super-exponentially sparse communication scheme based on successive elimination. 
From the upper bounds in Table \ref{tab:comparative-study}, we see that at one extreme of the total cost spectrum lies the periodic communication scheme which achieves a small total number of arm selections but incurs a large communication cost ($O(T)$). At the other extreme lies the super-exponentially sparse communication scheme in which the communication cost is small ($O(\ln \ln T)$) but the total number of arm selections is large ($O(T^2)$).  {\sc FedElim}, which utilises an   exponentially sparse communication scheme, sits between the other two schemes and strikes a balanced trade-off between the total number of arm selections ($O(T)$) and communication cost ($O(\ln T)$).

\begin{table}[!t]
\centering
\begin{tabular}{|l|l|l|l|}
\hline
\small Scheme& \small No. of Arm Selections & \small Comm. Cost & \small Total Cost\\
\hline
Periodic (see \eqref{eq:periodic-1}-\eqref{eq:periodic-3}) & $ O(T)$ & $O(C\,T/H) $ &$O((1+C/H)T) $ \\
\hline
{\sc FedElim} (see \eqref{eqn:first_line}-\eqref{eq:total-cost-FedElim}) & $2\cdot T$& $O(C\,\ln T)$ & $3\cdot  T$\\
\hline
Super-Exponential & $ O(T^2)$& $O(C \, \ln \ln T)$& $O(T^2)$\\
\hline
\end{tabular}
\caption{Worst-case upper bounds on the total number of arm selections, communication cost, and the total cost of (a) a periodic communication scheme with period $H$ based on successive elimination, (b) $\textsc{FedElim}$ (with exponentially sparse communication), and (c) a super-exponentially sparse communication scheme based on successive elimination. Here, $T$ is as defined in \eqref{eq:high-prob-upper-bound-c-is-zero}.}
\label{tab:comparative-study}
\end{table}

\section{The MovieLens Dataset: Description, Dataframes, Cleanup, and Sampling}
For our numerical experiments, we use the publicly available MovieLens dataset available at \url{https://files.grouplens.org/datasets/hetrec2011/hetrec2011-movielens-2k-v2.zip}. We extract the movie ratings from the $\texttt{user-ratedmovies.dat}$ file, the country names from the $\texttt{movie-countries.dat}$ file, and the genres from the $\texttt{movie-genres.dat}$ file. Common to each of the aforementioned files is the \texttt{movieID} semantic, a unique identifier associated with each movie for which one or more users' ratings are available in the dataset. There are a total of $10,197$ distinct \texttt{movieID} values in the dataset. 

\begin{table}[t]
\centering
\begin{tabular}{ |l|l| } 
 \hline
 Afghanistan & Libya \\ 
 Bhutan & Mexico \\ 
 Bosnia and Herzegovina & Norway \\ 
 Burkina Faso & Peru \\ 
 Canada & Philippines \\ 
Chile & Poland \\ 
Croatia & Portugal \\ 
Cuba & Romania \\ 
Denmark & Senegal \\ 
East Germany& South Africa \\
Federal Republic of Yugoslavia& Soviet Union \\
Germany& Spain \\
Greece& Switzerland \\
Hong Kong& Tunisia \\
Hungary& Turkey \\
Iran& UK \\
Ireland& USA \\
Israel& Vietnam \\
Jamaica & nan\\
\hline
\end{tabular}
\caption{List of the $38$ distinct \texttt{country} values in the {\em ratings-genres-countries} dataframe.}
\label{tab:countries-from-movielens}
\end{table}

\begin{table}[t]
\centering
\begin{tabular}{ |l|l| } 
 \hline
 Action & Horror \\ 
 Adventure & IMAX \\ 
 Animation & Musical \\ 
Children & Mystery \\ 
Comedy & Romance \\ 
Crime & Sci--Fi \\ 
Documentary & Short \\ 
Drama & Thriller \\ 
Fantasy & War \\ 
Film--Noir & Western \\
\hline
\end{tabular}
\caption{List of the $20$ distinct \texttt{genre} values in the {\em ratings-genres-countries} dataframe.}
\label{tab:genres-from-movielens}
\end{table}

{\em Dataset to Dataframes:} The \texttt{user-ratedmovies.dat} file consists of movie ratings from $2,113$ anonymised users. We load the contents of this file onto a \texttt{pandas.DataFrame} in Python with the following $3$ columns: \texttt{userID}, \texttt{movieID}, and \texttt{rating}. Each row in this dataframe contains the \texttt{rating} (a number belonging to the finite set $\{0, 0.5, 1, 1.5, \ldots, 5\}$) provided by a user with ID \texttt{userID} for a movie with ID \texttt{movieID}. The so-created dataframe (say the {\em ratings} dataframe) has a total of $855,598$ rows. Along similar lines, the {\em countries} dataframe is formed from the file \texttt{movie-countries.dat} with its  columns as \texttt{movieID} and \texttt{country}. Each row of this dataframe contains information about the \texttt{country} in which a movie with ID \texttt{movieID} was shot. This dataframe has a total of $10,197$ rows, one corresponding to each \texttt{movieID}. Also, there are a total of $72$ distinct \texttt{country} names in this dataframe. Lastly, we form the {\em genres} dataframe from the \texttt{movie-genres.dat} file with its columns as \texttt{movieID} and \texttt{genre}. Each row of this dataframe contains information about the \texttt{genre} to which a movie with ID \texttt{movieID} is associated. Because each movie may be associated to one or more genres, we observe that certain \texttt{movieID}s appear in multiple rows in this dataframe, with each row corresponding to a distinct \texttt{genre}. There are a total of $20,809$ rows in this dataframe, with $20$ distinct \texttt{genre} names. 

Notice that the \texttt{movieID} column is common to the {\em ratings}, {\em countries}, and {\em genres} dataframes. Leveraging this, we create one large dataframe, say the {\em ratings-genres-countries} dataframe, with $2,240,215$ rows the following $5$ columns: \texttt{userID}, \texttt{movieID}, \texttt{country}, \texttt{genre}, and \texttt{rating}. Each row of this large dataset contains the \texttt{rating} provided by a user with ID \texttt{userID} for a movie with ID \texttt{movieID} of genre \texttt{genre} and shot in the country \texttt{country}.

{\em Dataset cleanup:} We filter the rows of the dataset by (\texttt{country}, \texttt{genre}) tuple values. We set \texttt{country} and \texttt{genre} to be in one-one correspondence with ``client'' and ``local arm'' respectively, and the average of all the \texttt{rating} values corresponding to each (\texttt{country}, \texttt{genre}) pair is the mean of the client's arm. As such, we observe that some of the clients have multiple local best arms. After the elimination of such clients, the resulting {\em ratings-genres-countries} dataframe has a total of $2,040,975$ rows. This corresponds to $8,636$ distinct \texttt{movieID} values, $38$ distinct \texttt{country} values, and $20$ distinct \texttt{genre} values. Tables~\ref{tab:countries-from-movielens} and~\ref{tab:genres-from-movielens} list the distinct \texttt{country} and \texttt{genre} values respectively. 

{\em Sampling:} At each time instant, the reward from the local arm (read \texttt{genre}) of a client (read \texttt{country}) is picked uniformly at random from the pool of all \texttt{rating} values corresponding to the (\texttt{country}, \texttt{genre}) pair.

\section{Experiments on a Synthetic Dataset of Bernoulli Observations}
\begin{figure*}[t]
     \centering
     \begin{subfigure}[b]{1.275\textwidth/4}
         \centering
         \includegraphics[width=\textwidth]{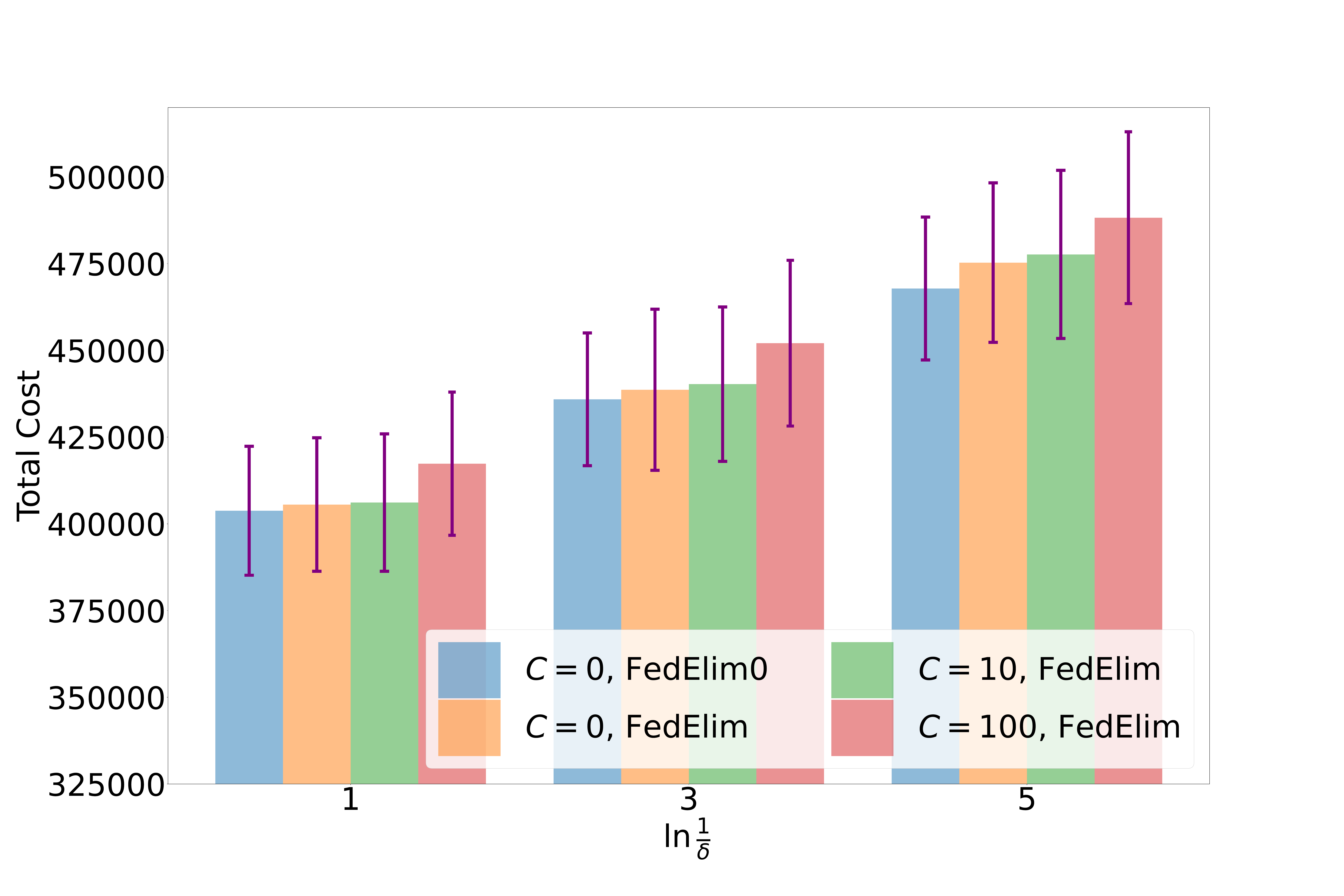}
         \caption{A plot comparing the  number of arm pulls of $\textsc{FedElim}0$ with the  cost of {\sc FedElim} for $C\in \{0, 10, 100\}$. $\textsc{FedElim}0$ has a lower cost compared to {\sc FedElim} when $C=0$.}
         \label{subfig:fedelim-total-cost-and-arm-pulls-Bernoulli}
     \end{subfigure}
     \hspace{.05in}
     \begin{subfigure}[b]{1.275\textwidth/4}
         \centering
         \includegraphics[width=\textwidth]{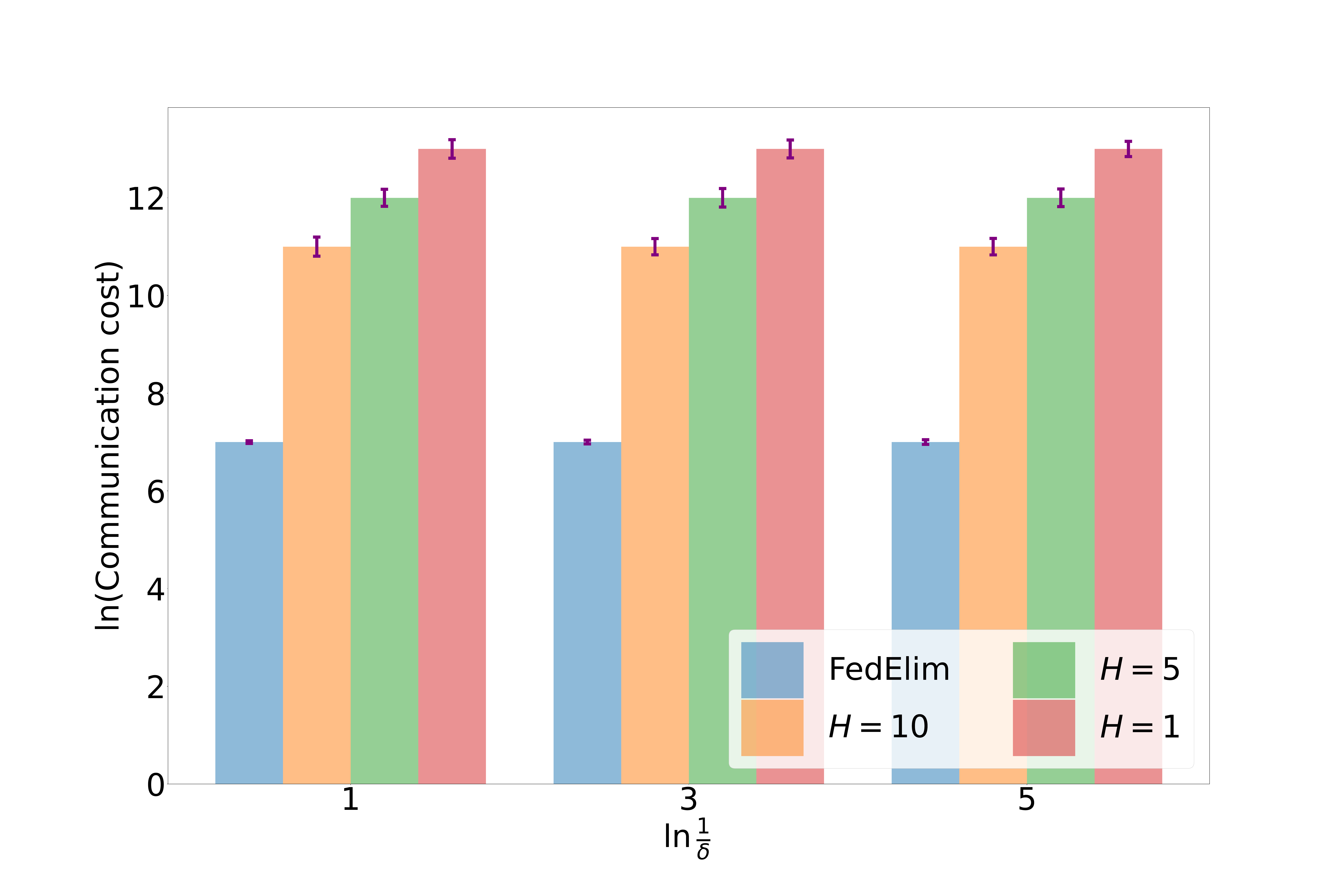}
         \caption{A   plot comparing of the communication cost  incurred under the proposed {\sc FedElim} algorithm and under periodic communication with period \mbox{$H \in \{1,5,10\}$.}}
         \label{subfig:comm-cost-comparison-Bernolli}
     \end{subfigure}
     \hspace{.05in}
     \begin{subfigure}[b]{1.275\textwidth/4}
         \centering
         \includegraphics[width=\textwidth]{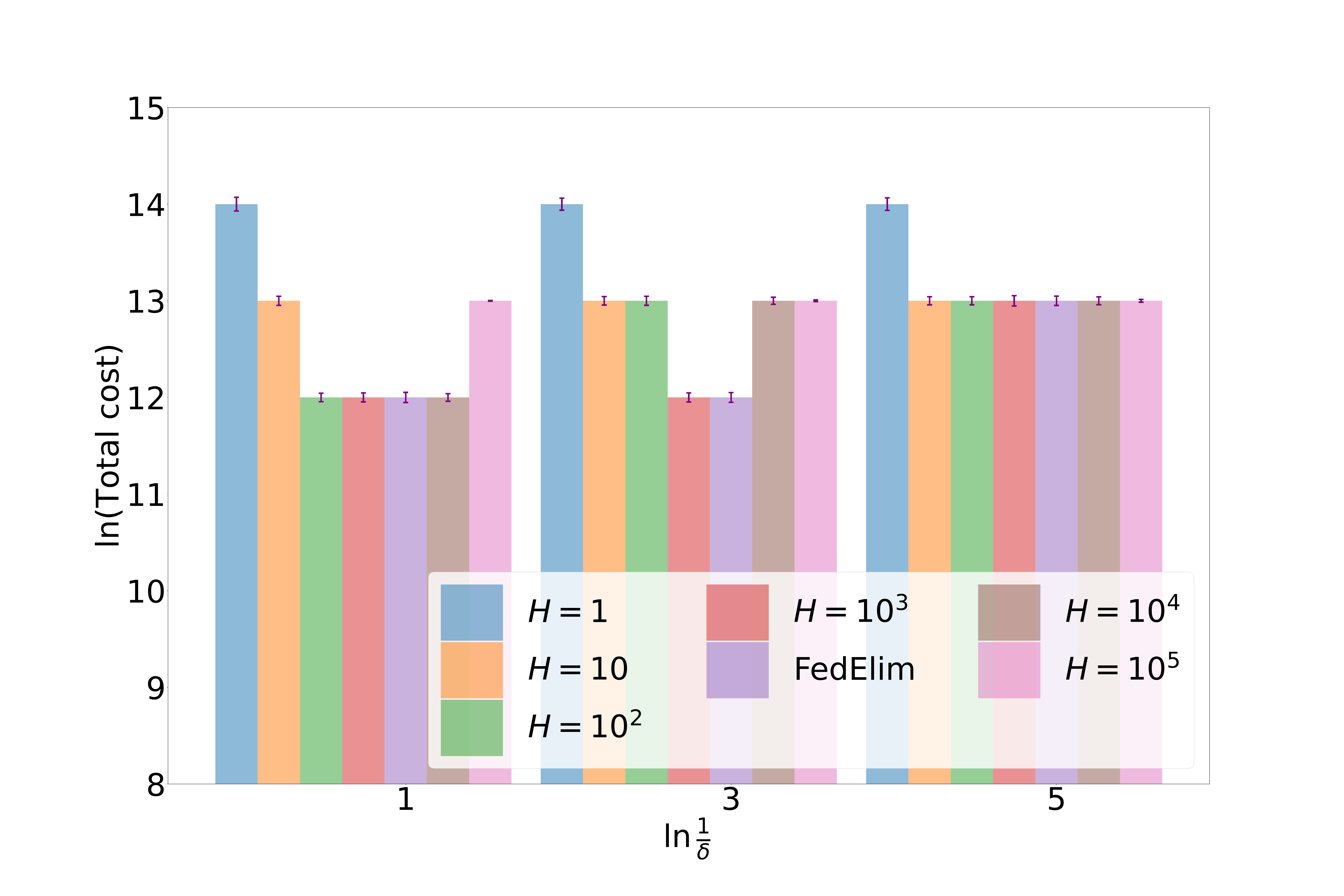}
         \caption{A   plot  comparing  the     cost     under  {\sc FedElim}   and    periodic communication with period $H =10^p$ for $p\in \{0,1,\ldots,5\}$.   {\sc FedElim} almost attains the minimum w/o knowing~$C$. }
         \label{subfig:total-cost-comparison-bernoulli}
     \end{subfigure}
        \caption{Numerical results on the synthetic Bernoulli dataset with the problem instance in \eqref{eqn:synthetic_bernoulli}.}
        \label{fig:three-figures-Bernoulli}
\end{figure*} 
To show that {\sc FedElim} correctly learns the local and global arms on a dataset that is generated according to   sub-Gaussian distributions, we run it on a dataset of Bernoulli  observations, i.e., the observations are $\{0,1\}$-valued. This dataset was considered for a tracking application in cooperative scenario \citep{mitra2021exploiting}. In particular,  we fix the following Bernoulli problem instance:
\begin{equation}
    \boldsymbol{\mu} = \begin{bmatrix}
	0.9 & 0.85 & 0.1\\
	0.85 & 0.8 & 0.3 \\
	0.7 & 0.6 & 0.5
\end{bmatrix} \in [0,1]^{3 \times 3}. \label{eqn:synthetic_bernoulli}
\end{equation}
This instance was also used in \citet{mitra2021exploiting}. 
Note that there are $3$ clients and $3$ arms and arm $k$ of client $m$ generates rewards according to Bernoulli distribution with mean $\mu_{k,m}$ for all $k\in [3]$ and $ m\in [3]$. The results are displayed in Figure~\ref{fig:three-figures-Bernoulli}. The qualitative behaviours of {\sc FedElim}, {\sc FedElim0} and the strategy that communicates with the server periodically are the same as that for the other datasets. In particular, similar to the MovieLens dataset, {\sc FedElim} again attains the absolute minimum among all schemes
that communicate periodically to the server, thus convincingly demonstrating its ability  to effectively
balance communication and the number of arm selections. Hence, our numerical results suggest that our {\sc FedElim} algorithm works well even for distributions beyond Gaussians that we analyzed in the main paper. In particular, {\sc FedElim} works effectively for distributions with bounded support, a sub-family of sub-Gaussian distributions.

\end{document}